\documentclass{article}

\usepackage{arxiv}

\usepackage{amsmath,amssymb,amsfonts,amsthm,mathtools}

\usepackage[utf8]{inputenc}
\usepackage[T1]{fontenc}
\usepackage{hyperref}
\usepackage{url}
\usepackage{booktabs}
\usepackage{array}
\usepackage{nicefrac}
\usepackage{microtype}
\usepackage{xcolor}
\usepackage{graphicx}
\usepackage{float}
\usepackage{multirow}
\usepackage{makecell}
\usepackage{algorithm}
\usepackage{algorithmic}
\usepackage{cite}

\usepackage[capitalize,noabbrev]{cleveref}

\providecommand{\IEEEPARstart}[2]{#1#2}

\theoremstyle{plain}
\newtheorem{theorem}{Theorem}
\newtheorem{proposition}[theorem]{Proposition}
\newtheorem{lemma}[theorem]{Lemma}
\newtheorem{corollary}[theorem]{Corollary}
\theoremstyle{definition}

\newtheorem{assumption}{Assumption}
\theoremstyle{remark}
\newtheorem{remark}[theorem]{Remark}

\crefname{assumption}{Assumption}{Assumptions}
\Crefname{assumption}{Assumption}{Assumptions}

\newcommand{\E}{\mathbb{E}}
\newcommand{\Prob}{\mathbb{P}}
\newcommand{\Ind}{\mathbf{1}}

\DeclareMathOperator*{\argmax}{arg\,max}

\newcommand{\Dtr}{D_{\mathrm{tr}}}
\newcommand{\Dtune}{D_{\mathrm{tune}}}
\newcommand{\Dcert}{D_{\mathrm{cert}}}

\newcommand{\ncert}{n_{\mathrm{cert}}}
\newcommand{\grid}{\Lambda \times T}
\newcommand{\Hset}{\mathcal{H}}
\newcommand{\Ghat}{\widehat{\mathcal{G}}}

\newcommand{\Rsel}{R_{\mathrm{sel}}}
\newcommand{\pacc}{p_{\mathrm{acc}}}
\newcommand{\pmin}{\pi_{\min}}
\newcommand{\Udep}{U_{\mathrm{dep}}}
\newcommand{\udep}{u^{\mathrm{dep}}}
\newcommand{\Umargin}{U^{*,\mathrm{margin}}_{\mathrm{dep}}}
\newcommand{\Umarginva}{U^{*,\mathrm{margin}\text{-}\mathrm{va}}_{\mathrm{dep}}}
\newcommand{\Mset}{M}

\newcommand{\paccHat}{\widehat{p}_{\mathrm{acc}}}
\newcommand{\RselHat}{\widehat{R}_{\mathrm{sel}}}
\newcommand{\Zbar}{\overline{Z}}
\newcommand{\ubar}{\overline{u}}
\newcommand{\SigmaZ}{\widehat{\sigma}_Z^2}
\newcommand{\SigmaU}{\widehat{\sigma}_u^2}
\newcommand{\sigmaHat}{\widehat{\sigma}^2}
\newcommand{\etaZ}{\eta_Z}
\newcommand{\etaU}{\eta_u}
\newcommand{\gammar}{\gamma_r}
\newcommand{\gammau}{\gamma_u}

\newcommand{\EB}{\mathrm{EB}}
\newcommand{\pLCB}{p_{\mathrm{LCB}}}
\newcommand{\ULCB}{U_{\mathrm{LCB}}}
\newcommand{\UCBours}{\mathrm{UCB}^{\Rsel}_{\mathrm{Ours}}}
\newcommand{\UCBHoeff}{\mathrm{UCB}^{\Rsel}_{\mathrm{Hoeff}}}

\newcommand{\LO}{L_O}
\newcommand{\LH}{L_H}
\newcommand{\Tobs}{T_{\mathrm{obs}}}
\newcommand{\Tex}{T_{\mathrm{ex}}}
\newcommand{\sigmastar}{\sigma^2_{\!*}}
\newcommand{\sigmaexact}{\sigma^2_{\mathrm{ex}}}
\newcommand{\sznot}{s_0}

\newcommand{\SCoRC}{\textsc{SCoRC}}
\newcommand{\HCRC}{Hoeffding\nobreakdash--CRC}

\newcommand{\PASS}{\checkmark}
\newcommand{\FAIL}{$\times$}

\renewcommand{\shorttitle}{\SCoRC{}: A Joint Finite-Sample Certificate}

\title{A Joint Finite-Sample Certificate for Adaptive Selective Conformal Risk Control}

\author{%
  Xiaoli Yu \\
  School of Cyber Security and Information Law \\
  Chongqing University of Posts and Telecommunications \\
  Chongqing 400065, China
  \And
  Jiamiao Liu\thanks{Corresponding author.} \\
  Department of Information, Xinqiao Hospital \\
  Army Medical University (Third Military Medical University) \\
  Chongqing 400037, China \\
  \texttt{ljm6046@163.com}
}

\date{}

\begin{document}

\maketitle

\begin{abstract}
Selective predictors answer on confident inputs and abstain elsewhere; deploying one safely needs a single finite-sample certificate that simultaneously upper-bounds the selected risk, lower-bounds the acceptance probability $\pacc$ above a floor $\pmin$, and lower-bounds the deployment utility. This certificate must be valid under adaptive threshold selection from a finite grid of $m$ pairs on $\ncert$ samples. We give such a certificate for bounded, possibly non-monotone losses by treating the selected risk directly as a ratio rather than through a Hoeffding-style range bound. The construction couples three confidence bounds: a variance-adaptive empirical-Bernstein bound on the ratio risk, a Clopper--Pearson bound on acceptance, and a two-sided closeness bound on utility. Together they lower-bound the certified policy's utility absolutely and to within $2\gammau$ of the best over the \emph{certified set}, both non-vacuous whenever feasible; a regime-scoped third leg matches an external oracle, informative only where the risk margin $\gammar < \alpha$ and vacuous at the headline operating points. Relative to the range-only Hoeffding-ratio construction this sharpens the acceptance-floor dependence from $1/\pmin$ to $1/\sqrt{\pmin}$, and a closed-form corollary identifies a per-pair regime in which our risk bound dominates a Hoeffding conformal risk control (Hoeffding--CRC) selective bound. Empirically, on ImageNet (three ResNets) and COCO val 2017 panoptic, the certificate opens a $+22$ pp certified-acceptance frontier over Hoeffding--CRC and is ${\approx}10{\times}$ tighter than a non-vacuous matched-valid baseline; these gains are regime-scoped, not universal, and absent on ADE20K. The certifier runs in $O(\ncert m)$ time.

\end{abstract}

\keywords{conformal risk control \and selective prediction \and finite-sample certification \and empirical Bernstein \and post-selection inference}

\section{Introduction}
\label{sec:introduction}

\IEEEPARstart{S}{elective} predictors abstain on some inputs and answer on the rest~\cite{chow1970_reject, elyaniv2010_selective, geifman2019_selectivenet}. In safety-critical pattern analysis (triaging chest-radiograph reports for radiologist review, abstaining from steering control under perception ambiguity, and routing image content to human moderators), reliable acceptance alone is not enough. The operator must also know how often the system will answer, and whether abstention is utility-justified once the costs of deferral are weighed against accepted decisions. A deployable selective system therefore needs a single finite-sample certificate covering three population quantities at once, not a high-probability bound on one of them.

We target the joint object
\[
\Rsel = \frac{\E[A L]}{\E[A]}, \quad \pacc = \E[A], \quad \Udep = \E[A v - c (1 - A)],
\]
the selected risk on accepted outputs, the acceptance probability, and the marginal deployment utility, where the acceptance indicator $A$ depends on a risk threshold and an abstention threshold chosen adaptively from a finite calibration grid $\grid$. No prior distribution-free certificate covers $(\Rsel, \pacc, \Udep)$ simultaneously after adaptive selection from $\grid$: conformal risk control (CRC) bounds only a non-selective expected loss~\cite{angelopoulos2022_crc, bates2021_rcps}; selective conformal procedures certify either a binary selective error or an asymptotic e-value version of utility~\cite{xu2025_scrc, bai2026_score}; per-pair Hoeffding--CRC variants give pointwise bounds that do not jointly cover acceptance and utility. \cref{tab:positioning} (\cref{sec:related}) tabulates the gap.

We close this gap. \cref{thm:joint-cert} gives a joint $(1 - \delta)$ finite-sample certificate for bounded, possibly non-monotone losses~\cite{angelopoulos2026_nmcrc, aldirawi2026_nmcrc} under adaptive two-threshold selection from $\grid$. The certified pair $(\hat\lambda, \hat\tau)$ satisfies $\Rsel \le \alpha$ and $\pacc \ge \pmin$ with probability at least $1 - \delta$; on the same event, $\Udep$ at the certified pair admits a finite-sample lower bound $\Udep(\hat\lambda, \hat\tau) \ge \ULCB(\hat\lambda, \hat\tau)$ and lies within $2\gammau$ of the best deployment utility over the certified set (\cref{cor:gset-opt}), both non-vacuous whenever the certifier is feasible. A stronger \emph{external} optimality holds against an oracle restricted to risk-certifiable policies ($\Rsel \le \alpha - \gammar$, $\pacc \ge 2\pmin$); this margin-oracle leg is informative in the regime $\gammar < \alpha$ and is vacuously valid otherwise. The factor of two between the certified floor $\pacc \ge \pmin$ and the oracle margin $\pacc \ge 2 \pmin$ is a single derived consequence of finite-sample inversion. Assumptions are light: i.i.d.\ data under a three-split protocol, with the grid and user parameters $(\alpha, \pmin, \delta)$ fixed before certification and a bounded loss.

The certificate is assembled from three coupled confidence bounds computed in a single pass over the certification split (\cref{fig:hero}): a variance-adaptive empirical-Bernstein upper bound~\cite{howard2021_csequences} on the selected risk, a Clopper--Pearson lower bound on $\pacc$, and a two-sided Maurer--Pontil bound on $\Udep$. Relative to the standard range-only Hoeffding-ratio construction, this changes the $\Rsel$-margin acceptance-floor dependence from $1/\pmin$ to $1/\sqrt{\pmin}$; a \emph{construction-specific} lower bound (\cref{prop:lower-bound}), exhibited on a single two-point distribution, shows that the range-only Hoeffding-ratio construction cannot escape the $1/\pmin$ rate, so the separation is genuine \emph{for that construction}, not a minimax separation over the certificate class, which is left open.

\begin{figure*}[t]
\centering
\includegraphics[width=\textwidth]{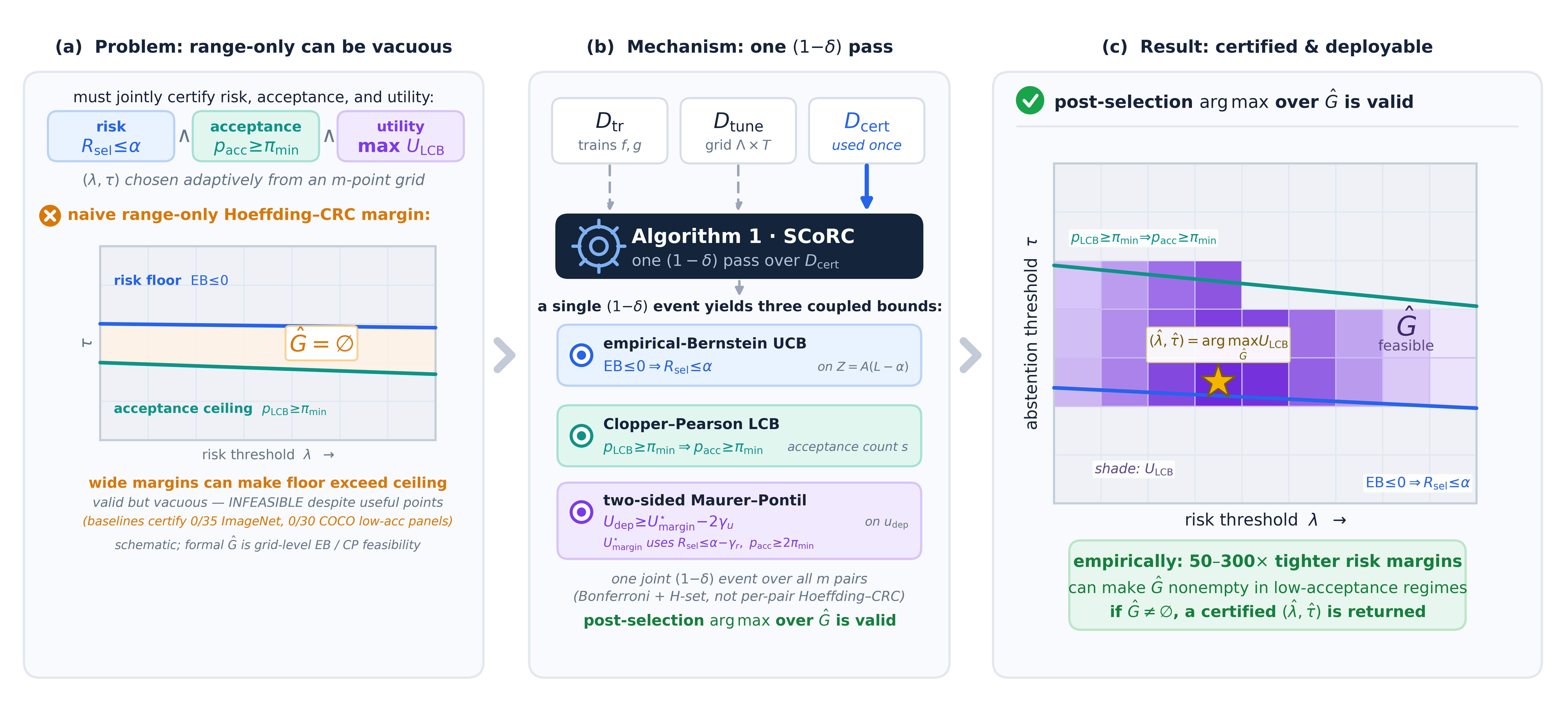}
\caption[The joint certificate at a glance: problem, mechanism, and result.]{The joint certificate at a glance: \textbf{problem}, \textbf{mechanism}, and \textbf{result}.
\textbf{(a) Problem.} A deployable selective predictor must \emph{jointly} certify selected risk ($\Rsel \le \alpha$), acceptance ($\pacc \ge \pmin$), and utility ($\Udep$). Under the range-only Hoeffding-ratio construction the risk margin is wide enough that the certifiable risk floor can rise above the acceptance ceiling, so no operating point is jointly certifiable and the certificate is vacuous (schematic).
\textbf{(b) Mechanism.} A three-split protocol ($\Dtr$ trains the base predictor $f$ and acceptance score $g$; $\Dtune$ fixes the candidate grid $\grid$ and the user parameters $(\alpha, \pmin, \delta)$; $\Dcert$ is consumed once) feeds \cref{alg:certify} (\SCoRC), which makes a single $(1 - \delta)$ pass over $\Dcert$ producing three coupled bounds: an empirical-Bernstein upper bound on $Z = A(L - \alpha)$ (controlling $\Rsel$), a Clopper--Pearson lower bound on $\pacc$, and a two-sided Maurer--Pontil bound on $\Udep$. One joint $(1 - \delta)$ event over all $m$ grid pairs (a deterministic $\Hset$-set union, not a per-pair \HCRC{} construction) makes the post-selection $\argmax$ over the certified feasible set $\Ghat$ valid.
\textbf{(c) Result.} When $\Ghat \ne \emptyset$ the certifier returns the pair $(\hat\lambda, \hat\tau) = \argmax_{(\lambda, \tau) \in \Ghat} \ULCB$ inside the feasible region of the risk/abstention-threshold plane, and \textsc{Infeasible} otherwise. Empirically, $50$--$300{\times}$ tighter risk margins can make $\Ghat$ nonempty in low-acceptance regimes that the range-only baseline cannot certify.}
\label{fig:hero}
\end{figure*}

We validate the certificate on two task families. On ImageNet classification with three ResNet backbones, the joint construction certifies low-acceptance operating points the textbook range-only Hoeffding--CRC baseline cannot reach (\cref{fig:cert-frontier}); since that baseline is vacuous at this floor, the $50$--$300{\times}$ all-pair width ratio measures vacuity-avoidance, while against a non-vacuous matched-valid normalisation the certificate is ${\approx}\,10{\times}$ tighter (tying at maximum acceptance). Tighter per-pair primitives that certify only the narrower per-pair risk object (Bernstein, WSR) can win in their own regimes (\cref{sec:discussion}). On COCO val 2017 panoptic segmentation under pixel-accuracy loss, it opens a $+22$\,pp certified-acceptance frontier above Hoeffding--CRC. On ADE20K with a binary mIoU loss the joint construction is not amortised, and a per-pair Hoeffding--CRC comparator dominates: the payoff is regime-characterised, not universal. A closed-form corollary (\cref{cor:regime}) predicts which side of the comparison applies on a given pair from the accepted-sample variance, mean, and acceptance fraction.

\noindent\textbf{Contributions.}
\begin{enumerate}
\item \textbf{Joint certificate (\cref{thm:joint-cert}).} A finite-sample $(1 - \delta)$ certificate for selective prediction that simultaneously bounds $\Rsel \le \alpha$, $\pacc \ge \pmin$, and $\Udep$, under bounded non-monotone losses and adaptive two-threshold selection on a finite grid, with direct ratio handling of $\Rsel$. The utility leg holds as an absolute lower bound $\Udep \ge \ULCB$ and a certified-set optimality guarantee (\cref{cor:gset-opt}), both always non-vacuous when feasible. It additionally provides a regime-scoped external margin-oracle bound that is informative when $\gammar < \alpha$, with a variance-adaptive refinement in \cref{cor:va-oracle}.

\item \textbf{Variance-adaptive post-selection construction.} A coupled empirical-Bernstein / Clopper--Pearson / Maurer--Pontil composition with a deterministic-eligibility ($\Hset$-set) union argument (\cref{lem:inclusion}) that preserves the $\delta/4$ per-component budget and reduces the $\Rsel$-margin acceptance-floor dependence from $1/\pmin$ to $1/\sqrt{\pmin}$ relative to the range-only Hoeffding-ratio construction (\cref{prop:lower-bound}).

\item \textbf{Regime-separation corollary (\cref{cor:regime}).} A closed-form per-pair sufficient condition on accepted-sample variance, mean, and acceptance fraction under which the proposed $\Rsel$ upper bound dominates the Hoeffding--CRC selective upper bound on the same pair; the statement is scoped to that comparison and does not assert universal dominance.

\item \textbf{Algorithm and empirical validation.} An $O(\ncert m)$ calibration-time certifier (\cref{alg:certify}) with an explicit \textsc{Infeasible} return, evaluated on ImageNet classification (three ResNet backbones) and dense pattern analysis (COCO val 2017 panoptic, ADE20K), delivering a $+22$\,pp certified-acceptance frontier on COCO and ${\approx}\,10{\times}$ tighter per-pair risk bounds than a non-vacuous matched-valid normalisation on ImageNet (ties at maximum acceptance; $50$--$300{\times}$ versus the textbook range-only Hoeffding--CRC baseline, which is vacuous here), with the ADE20K result delimiting the regime of payoff.
\end{enumerate}

\noindent\textbf{Organisation.} \cref{sec:related} positions the certificate against prior CRC, RCPS, and selective-prediction work; \cref{sec:method} states the three-split protocol and \cref{alg:certify}; \cref{sec:theory} proves \cref{thm:joint-cert} and \cref{cor:regime}; \cref{sec:experiments} reports the empirical payoff and the regime characterisation; \cref{sec:discussion} discusses scope and open directions, and \cref{sec:conclusion} closes.

\section{Related Work}
\label{sec:related}

Six axes separate methods for selective deployment under bounded loss. \cref{tab:positioning} tabulates how each line of work fares on each axis; the rest of this section describes the lines and positions ours.

\begin{table*}[t]
\centering
\caption{Six-axis positioning of selective certification methods. Cell entries reflect what each method \emph{publishes}; ``n/a'' indicates the axis is not addressed by that method's stated object; ``partial'' indicates the method addresses the axis in a restricted regime that does not subsume our setting.}
\label{tab:positioning}
\footnotesize
\renewcommand{\arraystretch}{1.15}
\resizebox{\textwidth}{!}{%
\begin{tabular}{lcccccc}
\toprule
Method & \makecell{Certifies \\ $\Rsel$} & \makecell{Certifies \\ $\pacc$} & \makecell{Finite-sample \\ utility LCB} & \makecell{Adaptive \\ $(\lambda, \tau)$ grid} & \makecell{Direct ratio \\ $\E[A L]/\E[A]$} & \makecell{Non-monotone \\ loss} \\
\midrule
\textbf{Ours (\SCoRC)}                                      & \PASS                & \PASS                & \PASS         & \PASS         & \PASS        & \PASS         \\
SCRC \cite{xu2025_scrc}                                     & \PASS                & implicit cutoff      & \FAIL         & joint $(\lambda_1, \lambda_2)$ & avoids ratio  & needs monotone \\
Conformal selective prediction \cite{bai2026_score}         & \PASS                & \FAIL                & population-asymp. & \FAIL         & partial      & unclear       \\
LTT \cite{angelopoulos2021_ltt}                             & n/a                  & n/a                  & \FAIL         & partial       & \FAIL        & \PASS         \\
Non-monotone CRC \cite{angelopoulos2026_nmcrc,aldirawi2026_nmcrc} & non-selective   & n/a                  & \FAIL         & \FAIL         & \FAIL        & \PASS         \\
CRC \cite{angelopoulos2022_crc}, RCPS \cite{bates2021_rcps} & n/a (non-selective)  & n/a                  & \FAIL         & \FAIL         & n/a          & needs monotone \\
WSR \cite{waudby2024_wsr}                                   & per-pair             & \FAIL                & \FAIL         & regime-dep.   & via $A(L - \alpha)$ & \PASS  \\
\bottomrule
\end{tabular}%
}
\end{table*}

\subsection{Selective classification and the reject option}

Selective classification with a reject option goes back to Chow~\cite{chow1970_reject}, who derived the optimal error--reject tradeoff for a known likelihood ratio. El-Yaniv and Wiener~\cite{elyaniv2010_selective} laid the noise-free foundations of selective classification in a learning-theoretic framework; Geifman and El-Yaniv~\cite{geifman2017_selective_dnn} gave a high-probability bound on the selective risk for deep networks via a guaranteed-risk selection rule, and the same authors~\cite{geifman2019_selectivenet} introduced the SelectiveNet architecture with an integrated reject option. These works control the selective risk (or its risk--coverage tradeoff) for a \emph{single} pre-specified target, but do not deliver a joint finite-sample certificate that simultaneously covers the selected risk, the acceptance probability, and the deployment utility under adaptive two-threshold grid selection. Our work picks up the certified-deployment question in this lineage and gives the joint object.

\subsection{Conformal prediction and risk control}

Conformal prediction \cite{bates2021_rcps,angelopoulos2021_ltt} delivers distribution-free coverage on prediction sets. Conformal risk control \cite{angelopoulos2022_crc} and its risk-controlling-prediction-sets predecessor \cite{bates2021_rcps} extend the framework to bounded expected losses. These approaches certify the \emph{non-selective} expected loss $\E[L]$ and sidestep the random denominator $\E[A]$ that makes the ratio-form selected risk $\Rsel$ the harder object. Cross-validated and ordinal variants \cite{cohen2024_cvcrc,xu2024_crc_ordinal} sharpen rates and broaden the loss class but inherit the non-selective object. Pareto testing \cite{laufer2022_pareto} controls multiple risks simultaneously by reducing to a sequence of conformalised tests; its post-selection guarantee is over a different ordering of operating points than ours and assumes a fixed Pareto frontier.

\subsection{Selective conformal risk control}

Selective CRC \cite{xu2025_scrc} handles a joint $(\lambda_1, \lambda_2)$ selection but requires the loss to be non-increasing in $\lambda_2$ and uses a population-quantile cutoff to sidestep the random denominator. The recent selective-prediction work of \cite{bai2026_score} introduces an e-value certificate via $\E[L \cdot E] \le 1$ that avoids the ratio directly but gives a single trust threshold with population-asymptotic Neyman--Pearson utility; no finite-sample utility lower bound is established and no separate acceptance lower bound is delivered. Two-stage risk control for ranked retrieval \cite{xu2024_twostage} controls risk across a sequential retrieval-then-ranking pipeline rather than under adaptive threshold-pair selection, while conformal selective inference for false coverage \cite{bao2024_scop,jin2024_focal} treats selection of \emph{test points} rather than of threshold pairs, both related but distinct problems. None of these works delivers a joint certificate on $(\Rsel, \pacc, \Udep)$ under adaptive grid selection.

\subsection{Non-monotone and adaptive CRC}

Algorithmic-stability-based non-monotone CRC \cite{angelopoulos2026_nmcrc} certifies bounded non-monotone losses with multidimensional parameters under an algorithmic-stability assumption, with selective image classification as one demonstration application; its certificate target is the (non-selective) expected loss, not the joint $(\Rsel, \pacc, \Udep)$ object with direct ratio handling of the selected risk under adaptive grid selection. A concurrent line \cite{aldirawi2026_nmcrc} establishes finite-sample guarantees for non-monotone CRC but does not give a joint certificate on $(\Rsel, \pacc, \Udep)$ under adaptive grid selection. Automatically adaptive CRC \cite{blot2024_aacrc} achieves approximate \emph{conditional} risk control by data-driven adaptation to test-sample difficulty, but targets the non-selective expected loss and lacks the joint $(\lambda, \tau)$ coupling on the selected-risk object. Online conformal abstention \cite{online2025_conformal_abstention} controls factuality via abstention under adversarial bandit (partial) feedback in interactive language-model systems, a different online setting; we cite it to disambiguate the term ``selective''. Anytime-valid conformal risk control \cite{hultberg2026_avcrc} gives high-probability anytime risk control but not the joint selective object. We treat these as the closest near-misses to disambiguate our claim of being the first to deliver the joint certificate under adaptive selection.

\subsection{Concentration primitives and comparator baselines}

We build on the empirical-Bernstein inequality \cite{howard2021_csequences} (used in two-sided form via the conservative two-tail union in our deployed code path) and Clopper--Pearson binomial confidence intervals as black boxes. As comparator primitives we test against the predictable-mixture betting confidence sequence \cite{waudby2024_wsr}, e-value post-selection corrections \cite{xu2022_psi_evalue_ci}, the standard Hoeffding-on-range baseline, and a Bernstein-on-accepted-samples baseline that is the structural sibling of our method without joint-certification coupling. Time-uniform nonparametric confidence sequences \cite{howard2021_csequences} cover the anytime-valid setting and inform the WSR primitive.

\subsection{Post-selection inference}

Our adaptive-grid argument is sometimes confused with naive uniform concentration plus a union bound over the grid. Two distinctions matter, both made precise in \cref{sec:theory}. First, the deterministic-eligibility restriction in our inclusion lemma (\cref{lem:inclusion}) is the technical step where naive union bounding fails: confining the acceptance-count concentration events to the deterministic eligible set keeps the per-component failure budget from inflating, whereas an unrestricted union does not; the delta ledger (\cref{tab:delta-ledger}) itemises every contribution. Second, at low acceptance our construction is tighter than the post-selection correction a uniform union argument would give: a variance-adaptive sharpening of the acceptance-floor dependence that \cref{thm:joint-cert} quantifies and that \cref{prop:lower-bound} shows the range-only alternative cannot match. Recent post-selection conformal work on e-value confidence intervals \cite{xu2022_psi_evalue_ci} handles the correction at the e-value level but does not address the simultaneous $(\Rsel, \pacc, \Udep)$ object.

\subsection{Distribution shift and beyond i.i.d.}

Non-exchangeable conformal risk control \cite{farinhas2023_nexcrc} and weighted variants \cite{zecchin2025_wcrc} extend CRC beyond i.i.d.\ calibration but for non-selective objects. The semi-supervised \cite{einbinder2024_sscalib} and multiply robust \cite{paul2025_mrcrc} extensions sharpen rates under additional structure. We do not claim shift robustness in our finite-sample theory; we report a descriptive ImageNet-V2 evaluation in \cref{sec:experiments} as a stress test only and document why the sample-size condition is not met. Future extensions to weighted and non-exchangeable selective settings are natural and discussed in \cref{sec:conclusion}.

\paragraph{Positioning.} Each line of prior work above addresses some but not all of four ingredients: a joint certificate on $\Rsel + \pacc + \Udep$ rather than on one or two of them; adaptive selection over a finite $(\lambda, \tau)$ grid; direct handling of the random denominator in the ratio-form selected risk $\Rsel$; and a bounded non-monotone loss. \Cref{tab:positioning} tabulates which combination each method covers. No published method currently provides simultaneous coverage of all four. Cell content in \cref{tab:positioning} reflects what each method \emph{publishes}; we do not claim other methods could not be extended to cover the missing cells, only that they currently do not.

\section{Problem, Setup, and Algorithm}
\label{sec:method}

\subsection{Problem and notation}
\label{sec:notation}

Let $(X_i, Y_i)_{i=1}^{\ncert}$ be i.i.d.\ samples from an unknown distribution $P_{XY}$. A base predictor $f$ produces a candidate output $\hat C_\lambda(X_i)$ controlled by a risk threshold $\lambda \in \Lambda$, and an acceptance score $g(X_i)$ controls a $\{0, 1\}$-valued acceptance indicator $A(\lambda, \tau)(X_i) := \Ind\{g(X_i) > \tau\}$ at an abstention threshold $\tau \in T$. The grid $\grid$ has $|\grid| = m$ candidate pairs. The bounded loss $L : \mathcal X \times \mathcal Y \to [0, B]$ may be non-monotone in $\lambda$; the deployment value $v : \mathcal X \times \mathcal Y \to [0, V]$ is bounded; the abstention cost $c \ge 0$ is known. Define per-sample contributions $Z_i(\lambda, \tau) := A_i(\lambda, \tau) (L_i(\lambda, \tau) - \alpha)$ and $u_i^{\mathrm{dep}}(\lambda, \tau) := A_i(\lambda, \tau) v_i - c (1 - A_i(\lambda, \tau))$, and let $\Zbar, \ubar, \SigmaZ, \SigmaU, s(\lambda, \tau) := \sum_i A_i(\lambda, \tau)$ denote the empirical mean, Bessel-corrected sample variance, and acceptance count. The accepted-sample risk estimate and within-accepted Bessel-corrected variance, used in \cref{cor:regime}, are $\RselHat(\lambda, \tau) := \frac{1}{s} \sum_{i : A_i = 1} L_i$ and $\sigmaHat(\lambda, \tau) := \frac{1}{s - 1} \sum_{i : A_i = 1} (L_i - \RselHat)^2$, both well-defined when $s(\lambda, \tau) \ge 2$.

\subsection{The three deployment quantities}
\label{sec:deployment-quantities}

The certificate guarantees the simultaneous behaviour of three population quantities that together summarise a selective system in production. The \emph{selected risk} is defined for every pair with positive acceptance: when $\pacc(\lambda, \tau) > 0$,
$
\Rsel(\lambda, \tau) := \E[L \mid A(\lambda, \tau) = 1];
$
when $\pacc(\lambda, \tau) = 0$, $\Rsel(\lambda, \tau)$ is left undefined and no statement involving $\Rsel(\lambda, \tau)$ is asserted at that pair. All theorems and lemmas below invoke $\Rsel$ only under $\pacc \ge \pmin > 0$, and the empirical estimator $\RselHat(\lambda, \tau)$ and the within-accepted-sample variance $\sigmaHat(\lambda, \tau)$ are used only when $s(\lambda, \tau) \ge 2$. $\Rsel$ is the population loss restricted to the accepted subset, expressed as a ratio of two expectations whose random denominator is what makes this a harder object than the unconditional risk $\E[L]$. The \emph{acceptance probability}
$
\pacc(\lambda, \tau) := \E[A(\lambda, \tau)] \in [0, 1]
$
is the fraction of inputs the system answers; a lower bound on $\pacc$ certifies that the system is doing useful work rather than trivially abstaining. The \emph{marginal deployment utility}
$
\Udep(\lambda, \tau) := \E[A v - c (1 - A)] \in [-c, V]
$
nets the value collected on accepted inputs against the cost of abstention; it is positive when the system answers more inputs of value $v$ than it abstains on, and negative when abstention cost dominates. Larger $\Udep$ is better. The three are not redundant: a small-acceptance system can have $\Rsel \le \alpha$ trivially yet a poor $\Udep$ because abstention cost dominates, and a lower bound on $\pacc$ alone does not constrain the loss conditional on acceptance. We therefore certify all three jointly.

\subsection{Three-split protocol}
\label{sec:three-split}

We require a three-split partition $\Dtr \sqcup \Dtune \sqcup \Dcert$ drawn i.i.d.\ from $P_{XY}$. The base predictor $f$ and the acceptance score $g$ are constructed from $\Dtr$ (and may additionally use $\Dtune$); the candidate grid $\grid$ and the user parameters $(\alpha, \pmin, \delta)$ are constructed from $\Dtune$; the certification split $\Dcert$ is consumed exclusively by \cref{alg:certify}. The only property the proof uses is that $f, g, \grid$, and $(\alpha, \pmin, \delta)$ are \emph{cert-independent}: measurable functions of $(\Dtr, \Dtune)$ and external randomness, but not of $\Dcert$. This makes $\Dcert$ statistically independent of the construction of $f$, $g$, and $\grid$, so the i.i.d.\ assumption applies cleanly inside \cref{alg:certify} (conditionally on $\Dtr, \Dtune$).

\subsection{Assumptions}
\label{sec:assumptions}

The certificate is proved under the following assumptions:
\begin{assumption}[i.i.d.]\label{ass:iid} $(X_i, Y_i)_{i=1}^{\ncert}$ are i.i.d.\ from $P_{XY}$.\end{assumption}
\begin{assumption}[Bounded loss and utility]\label{ass:bounded} The loss bound is normalised so that $B \ge 1$. The loss and utility satisfy $L \in [0, B]$, $v \in [0, V]$, and $c \ge 0$, with $B, V, c$ known.\end{assumption}
\begin{assumption}[Finite grid and measurability]\label{ass:grid} $|\grid| = m \ge 1$, and for every $(\lambda, \tau) \in \grid$ the maps $A(\lambda, \tau) : \mathcal X \to \{0, 1\}$, $L(\lambda, \tau) : \mathcal X \times \mathcal Y \to [0, B]$, and $v : \mathcal X \times \mathcal Y \to [0, V]$ are measurable (hence so are the products $A L$ and $A v$, and the per-sample statistics $Z_i, u^{\mathrm{dep}}_i$ are bounded measurable random variables).\end{assumption}
\begin{assumption}[User parameters]\label{ass:params} $\alpha, \pmin, \delta \in (0, 1)$.\end{assumption}
\begin{assumption}[Sample size $(\star)$]\label{ass:samplesize}
\begin{equation}
\ncert \ge 32 \log(32 m / \delta) / \pmin.
\label{eq:sample-size}
\end{equation}
\end{assumption}
\begin{assumption}[Three-split]\label{ass:threesplit} $\Dtr, \Dtune, \Dcert$ are disjoint i.i.d.\ splits of $P_{XY}$; $f, g, \grid$ are constructed using only $\Dtr$ and $\Dtune$.\end{assumption}
\begin{assumption}[Selector status]\label{ass:selector} $g$ may be any measurable function of $\Dtr$ and $\Dtune$ (in particular, fixed independently of all splits, or learned from $\Dtr$ and/or $\Dtune$); the ``certify-learned'' option (constructing $g$ from $\Dcert$) is not permitted. Consequently $g$, $\grid$, and $\Hset$ are deterministic conditional on $(\Dtr, \Dtune)$.\end{assumption}

\Cref{ass:samplesize} arises from the Chernoff-step precondition in the Clopper--Pearson relative-error inversion (\cref{sublemma:cp-inversion} below) and is the only non-trivial sample-size requirement. \cref{ass:selector} is needed because the inclusion direction of \cref{lem:inclusion} uses the deterministic eligibility of the H-set $\Hset := \{(\lambda, \tau) \in \grid : \pacc(\lambda, \tau) \ge \pmin\}$; certify-learned selectors would make $\Hset$ data-dependent and break the analysis. The H-set eligibility is therefore an \emph{operational} requirement of the algorithm, not an internal proof artefact: deployments that violate it (e.g.\ by adapting $g$ on $\Dcert$) lose the certificate's validity. The ingredient stress test in \cref{sec:ingredients} (Row~6, ``Chernoff variance bridge'') reports the empirical degradation when the underlying H-set step is removed: the certifier becomes $0/10$ feasible across the tested regime.

\subsection{Algorithm}
\label{sec:algorithm}

\begin{algorithm}[t]
\caption{\SCoRC{} certifier (calibration time).}
\label{alg:certify}
\begin{algorithmic}[1]
\REQUIRE $\Dcert = (X_i, Y_i)_{i=1}^{\ncert}$ i.i.d.; grid $\grid$ of size $m$; risk target $\alpha$; acceptance floor $\pmin$; failure probability $\delta$; value $v$; cost $c$.
\STATE \textbf{Precondition $(\star)$:} $\ncert \ge 32 \log(32 m / \delta) / \pmin$.
\FOR{each pair $(\lambda, \tau) \in \grid$}
  \STATE Compute $A_i, L_i, Z_i := A_i (L_i - \alpha), u^{\mathrm{dep}}_i := A_i v_i - c (1 - A_i)$ for $i = 1, \ldots, \ncert$.
  \STATE Compute $\Zbar, \SigmaZ, s := \sum_i A_i, \ubar, \SigmaU$.
  \STATE $\EB(\lambda, \tau) \gets \Zbar + \sqrt{2 \SigmaZ \log(64 m / \delta) / \ncert} + 7 B \log(64 m / \delta) / (3 (\ncert - 1))$.
  \STATE $\pLCB(\lambda, \tau) \gets \mathrm{Beta}^{-1}(\delta / (16 m); s, \ncert - s + 1)$ if $s \ge 1$, else $0$.
  \STATE $\ULCB(\lambda, \tau) \gets \ubar - \sqrt{2 \SigmaU \log(8 m / \delta) / \ncert} - 7 (c + V) \log(8 m / \delta) / (3 (\ncert - 1))$.
\ENDFOR
\STATE $\Ghat \gets \{(\lambda, \tau) \in \grid : \EB(\lambda, \tau) \le 0 \wedge \pLCB(\lambda, \tau) \ge \pmin\}$.
\IF{$\Ghat = \emptyset$}
  \RETURN \textsc{Infeasible}.
\ELSE
  \RETURN $(\hat \lambda, \hat \tau) \gets \argmax_{(\lambda, \tau) \in \Ghat} \ULCB(\lambda, \tau)$ \hfill (ties broken by a fixed deterministic ordering).
\ENDIF
\end{algorithmic}
\end{algorithm}

\Cref{alg:certify} computes three coupled confidence bounds in a single pass over $\Dcert$ at every grid pair, takes the deterministic-feasible set
$
\Ghat := \{(\lambda, \tau) : \EB(\lambda, \tau) \le 0 \wedge \pLCB(\lambda, \tau) \ge \pmin\},
$
and returns the argmax of the utility lower bound if $\Ghat \ne \emptyset$. \textsc{Infeasible} is returned when $\Ghat = \emptyset$. The output object is the pair $(\hat \lambda, \hat \tau) \in \grid$.

\paragraph{Complexity.} Per pair, the algorithm computes $O(\ncert)$ per-sample statistics and one Beta-quantile evaluation. Total cost is $O(\ncert m)$ flops plus $O(m)$ Beta-quantile evaluations. Empirical runtime is $17.79$ ms at ImageNet scale ($\ncert = 33{,}000$, $m = 35$), with absolute timings reported in the supplementary material.

\paragraph{Implementation.} \cref{alg:certify} is implemented as a pure-Python module; the bounds use the Bessel-corrected sample variance and the two-sided Maurer--Pontil radius in the conservative form
$
\sqrt{2 \SigmaZ \log(4 / \delta') / \ncert} + 7 B \log(4 / \delta') / (3 (\ncert - 1))
$
at per-event level $\delta'$, matching the proof and the deployed code line-for-line.

\section{Joint Certificate and Regime Separation}
\label{sec:theory}

\subsection{Main result}

\begin{theorem}[Margin-oracle joint certificate]\label{thm:joint-cert}
Suppose \cref{ass:iid,ass:bounded,ass:grid,ass:params,ass:samplesize,ass:threesplit,ass:selector} hold. Define the margin-feasible set
$
\Mset(\alpha', \pmin) := \{(\lambda, \tau) \in \grid : \Rsel(\lambda, \tau) \le \alpha' \wedge \pacc(\lambda, \tau) \ge 2 \pmin\}
$
(the second argument is the certified floor $\pmin$; membership requires the \emph{stricter} oracle-margin floor $\pacc \ge 2\pmin$, the factor of two being the derived Clopper--Pearson inversion cost below) and the margin oracle $\Umargin(\alpha, \pmin) := \sup_{(\lambda', \tau') \in \Mset(\alpha - \gammar, \pmin)} \Udep(\lambda', \tau')$, with $\sup_\emptyset := -\infty$. Then, conditional on $\Dtr$ and $\Dtune$ (so that $f, g, \grid$ and the parameters are fixed, cert-independent objects; the guarantee then holds marginally over all three splits by the tower property), with probability at least $1 - \delta$ over the draw of $\Dcert$, \cref{alg:certify} either returns \textsc{Infeasible} or returns a pair $(\hat \lambda, \hat \tau) \in \Ghat$ satisfying
\begin{align}
\Rsel(\hat \lambda, \hat \tau) &\le \alpha, \\
\pacc(\hat \lambda, \hat \tau) &\ge \pmin, \\
\Udep(\hat \lambda, \hat \tau) &\ge \Umargin(\alpha, \pmin) - 2 \gammau,
\end{align}
where
\begin{align}
\gammar &:= 4 B \sqrt{\frac{\log(64 m / \delta)}{\ncert \pmin}} + \frac{14 B}{3} \frac{\log(64 m / \delta)}{\pmin (\ncert - 1)}, \\
\gammau &:= \max_{(\lambda, \tau) \in \grid} \etaU(\lambda, \tau).
\end{align}
The factor of two in the margin condition $\pacc \ge 2 \pmin$ is the derived cost of the Clopper--Pearson relative-error inversion (\cref{sublemma:cp-inversion}), not an arbitrary slack. Here $\gammau$ is the realised, \emph{data-dependent} maximum Maurer--Pontil radius computed on $\Dcert$ (it depends on the empirical utility variances $\SigmaU$); the utility guarantee is stated in terms of this realised radius, the quantity \cref{alg:certify} reports.
\end{theorem}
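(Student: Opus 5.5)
The plan is to build a single good event $\mathcal E$ of probability at least $1-\delta$ as the union complement of four component failure events, each allotted $\delta/4$, and then argue deterministically on $\mathcal E$. Conditional on $(\Dtr,\Dtune)$, \cref{ass:threesplit,ass:selector} make $\grid$, $g$, and the H-set $\Hset=\{\pacc\ge\pmin\}$ fixed, so each per-pair quantity is a mean of $\ncert$ i.i.d.\ bounded variables. The four events are:
\begin{itemize}
\item[(E1)] for every pair, $\E[Z]\le\EB(\lambda,\tau)$. This is the empirical-Bernstein (Maurer--Pontil) upper tail at level $\delta/(16m)$ per pair, with range $B$ since $Z\in[-\alpha,B]$; the $\log(64m/\delta)$ absorbs the two-sided conservative form.
\item[(E2)] for every pair, the Clopper--Pearson inequality $\pacc\ge\pLCB$ at level $\delta/(16m)$.
\item[(E3)] for every pair, two-sided Maurer--Pontil closeness $|\ubar-\Udep|\le\etaU$ at level $\delta/(4m)$ per pair, i.e.\ $\log(8m/\delta)$.
\item[(E4)] only for pairs in the deterministic set $\Hset$, a relative-error statement.
\end{itemize}
The statement in (E4) is that the Chernoff lower tail gives $s\ge \ncert\pacc/2$ and that the empirical variance satisfies $\SigmaZ\lesssim 2B^2\pacc$. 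It is taken from \cref{sublemma:cp-inversion} together with the inclusion lemma \cref{lem:inclusion}, which is where \cref{ass:samplesize} enters. Restricting (E4) to the deterministic $\Hset$ is what keeps this leg within its $\delta/4$ budget, and I will record the budget split as in the delta ledger.

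On $\mathcal E$, the first two conclusions are immediate. If $(\hat\lambda,\hat\tau)\in\Ghat$, then $\pacc\ge\pLCB\ge\pmin>0$ by (E2). Also $\E[A(L-\alpha)]\le\EB\le0$ by (E1). Dividing by $\E[A]=\pacc>0$ gives $\Rsel=\E[AL]/\E[A]\le\alpha$; this is the direct ratio handling, and it needs no denominator concentration at all. Because these hold simultaneously over all $m$ pairs, they hold for the data-dependent argmax.

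For the utility leg, fix any $(\lambda',\tau')\in\Mset(\alpha-\gammar,\pmin)$; if this set is empty, the bound is trivial since $\sup_\emptyset=-\infty$. I will show $(\lambda',\tau')\in\Ghat$ on $\mathcal E$.
\begin{itemize}
\item \emph{Acceptance.} Since $\pacc\ge2\pmin$, the pair lies in $\Hset$. By (E4) and the CP inversion sublemma, $\pLCB\ge\pacc/2\ge\pmin$. This is where the factor two arises.
\item \emph{Risk.} $\E[Z]=\pacc(\Rsel-\alpha)\le-\gammar\pacc$. The EB radius is at most $2\cdot\sqrt{2\SigmaZ\log/\ncert}+O(B\log/\ncert)$. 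The factor $2$ covers the fluctuation of $\Zbar$ around $\E[Z]$ from the other tail of (E1), which I will include two-sided. Then (E4) bounds $\SigmaZ\le 2B^2\pacc$, so the radius is at most $4B\sqrt{\pacc\log(64m/\delta)/\ncert}+\tfrac{14B}{3}\log(64m/\delta)/(\ncert-1)$. Since $\pacc\ge\pmin$, dividing by $\pacc$ gives exactly $\gammar$ as defined. Hence $\EB\le\E[Z]+\pacc\gammar\le0$.
\end{itemize}
Then the argmax property and (E3) give
\[
\Udep(\hat\lambda,\hat\tau)\ge\ULCB(\hat\lambda,\hat\tau)\ge\ULCB(\lambda',\tau')\ge\Udep(\lambda',\tau')-2\etaU(\lambda',\tau')\ge\Udep(\lambda',\tau')-2\gammau.
\]
Taking the supremum over $\Mset$ yields the third inequality.

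The main obstacle is the risk-inclusion step. The constants must be tracked so that an empirical variance bounded by a population quantity times $\pacc$, rather than by the range $B^2$, produces $\gammar$ with the $1/\sqrt{\pmin}$ dependence. This needs a variance concentration (or the Chernoff bridge on $s$) that is valid on the same event without spending extra $\delta$. I would first try bounding $\SigmaZ\le \tfrac{\ncert}{\ncert-1}\cdot B^2 s/\ncert$ and then use $s\le 2\ncert\pacc$ from a Chernoff upper tail on $\Hset$, folded into (E4).
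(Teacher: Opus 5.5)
Your architecture is the paper's. You intersect the event of \cref{lem:inclusion} (two-sided MP on $Z$, CP coverage, and Chernoff events on $s$ unioned only over the deterministic $\Hset$) with the utility event of \cref{lem:utility-closeness}. You get validity by dividing $\E[A(L-\alpha)]\le\EB\le 0$ by $\pacc\ge\pLCB\ge\pmin>0$, inclusion of each margin-oracle pair via a variance bridge and CP inversion, and then the same three-link $\argmax$ chain. Two concrete steps, however, are mis-stated and need the paper's constants.

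\emph{The lower-tail threshold.} The event $s\ge\ncert\pacc/2$ does not yield $\pLCB\ge\pacc/2$. With $q=\pacc/2$, the level $\ncert\pacc/2=\ncert q$ is the mean of $\mathrm{Bin}(\ncert,q)$. On that event $s$ may equal $\lceil\ncert q\rceil$, where $\Prob_q[\mathrm{Bin}(\ncert,q)\ge s]$ is close to $1/2$ rather than below $\delta/(16m)$, so the CP characterisation fails. Aiming directly at $\pLCB\ge\pmin$ fails the same way at $\pacc=2\pmin$. The threshold must sit a constant factor above $\ncert q$. The paper's event is $s\ge\frac34\ncert\pacc$ (lower tail at $\varepsilon=1/4$, failure $e^{-\ncert\pacc/32}$). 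The upper tail of $\mathrm{Bin}(\ncert,\pacc/2)$ at $\varepsilon=1/2$ (failure $e^{-\ncert\pacc/20}$) then gives $\pLCB\ge\pacc/2$ deterministically on that event. Both exponents are controlled by \cref{ass:samplesize}, which is where its constant $32$ comes from. Alternatively, take the sublemma's conclusion $\{\pLCB\ge\pacc/2\}$ itself as your event.

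\emph{The variance bridge.} The bound $s\le 2\ncert\pacc$ gives $\SigmaZ\le 2\frac{\ncert}{\ncert-1}B^2\pacc$, not $2B^2\pacc$ as you assert. Dividing by $\pacc$ and using only $\pacc\ge\pmin$ then overshoots the leading constant $4B$ in $\gammar$ by $\sqrt{\ncert/(\ncert-1)}$. There are two fixes:
\begin{itemize}
\item Use the paper's upper tail $s\le\frac32\ncert\pacc$ (failure $e^{-\ncert\pacc/10}\le\delta/(16m)$ on $\Hset$ under $(\star)$) with $\ncert/(\ncert-1)\le 4/3$, so that $\frac43\cdot\frac32=2$ exactly.
\item Keep your factor $2$ and use $\pacc\ge 2\pmin$ in the division step. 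This supplies a $\sqrt2$ slack that absorbs the Bessel factor.
\end{itemize}

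\emph{The ledger.} Your accounting is off, though harmlessly. The radius $\etaU$ with $\log(8m/\delta)$, read in the conservative two-sided form $\log(4/\delta')$, corresponds to level $\delta/(2m)$ per pair ($\delta/(4m)$ per tail). So the utility leg costs $\delta/2$, not $\delta/4$. Conversely, $E_1$ and $E_2$ cost only $\delta/16$ each, and the two Chernoff families at most $\delta/16$ each. The actual total is $\delta/4+\delta/2=3\delta/4\le\delta$, so the theorem survives, but this accounting should replace the ``four legs at $\delta/4$ each'' framing. With these corrections your argument coincides with the paper's proof.
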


\subsection{Delta ledger}

\Cref{tab:delta-ledger} lists every union-bound contribution to the $(1 - \delta)$ guarantee. The total failure is $\le 3 \delta / 4 \le \delta$, with $\delta / 4$ from \cref{lem:inclusion} and $\delta / 2$ from \cref{lem:utility-closeness}. The H-set restriction at U3 and U4 unions the Chernoff events only over the deterministic eligible set $\Hset := \{(\lambda, \tau) \in \grid : \pacc(\lambda, \tau) \ge \pmin\}$, with $|\Hset| \le m$. This restriction is required for \emph{correctness}, not for budget savings: the U3/U4 Chernoff tail bounds rely on the sample-size hypothesis $\ncert \pacc \ge \ncert \pmin \ge 32 \log(32 m / \delta)$, which holds only for pairs with $\pacc \ge \pmin$; for a pair with $\pacc < \pmin$ the per-event failure guarantees need not hold at all, so such pairs cannot be admitted to the Chernoff union. (The worst-case count is $|\Hset| \le m$ either way, so the formal budget is $\le \delta / 4$ regardless of $|\Hset|$.)

\paragraph{Constant map.} Three constants interact in the proof and deserve named handles. The per-event allocation $\delta / (16 m)$ at U1--U4 is the Bonferroni share for each of the four event families (two-sided MP on $Z$, Clopper--Pearson coverage, Chernoff lower-tail on $s$, Chernoff upper-tail on $s$) at each of $m$ pairs, with the $16$ instead of $4$ absorbing the $4{\times}$ factor that takes the four families to one joint event. The per-event allocation $\delta / (2 m)$ at U6 is the Bonferroni share for the single two-sided MP event on $\udep$ at each of $m$ pairs. The $3 \delta / 4$ headline at U7 comes from $\delta / 4 + \delta / 2$; the slack of $\delta / 4$ is the cost of pessimistic accounting and can be tightened with a more careful joint analysis without changing the asymptotic rate. The $\pmin$ vs $2 \pmin$ split between the certified-floor guarantee ($\pacc \ge \pmin$) and the margin-oracle hypothesis ($\pacc \ge 2 \pmin$) is the derived factor-of-two cost of inverting the Clopper--Pearson lower bound at finite sample size, established in \cref{sublemma:cp-inversion}.

\begin{table}[t]
\centering
\caption{Union-bound delta ledger for \cref{thm:joint-cert}. Subtotals sum to $\le 3 \delta / 4$ at U7; the H-set restriction at U3 and U4 keeps the count at $|\Hset| \le m$ instead of $m$.}
\label{tab:delta-ledger}
\scriptsize
\setlength{\tabcolsep}{3pt}
\begin{tabular}{ccccc}
\toprule
Step & Event family & Per-event level & Count & Subtotal \\
\midrule
U1 & $E_1$: two-sided MP on $Z$            & $\delta / (16 m)$ & $m$               & $\delta / 16$ \\
U2 & $E_2$: Clopper--Pearson coverage     & $\delta / (16 m)$ & $m$               & $\delta / 16$ \\
U3 & $E_3$: Chernoff lower-tail on $s$    & $\delta / (16 m)$ & $|\Hset| \le m$   & $\le \delta / 16$ \\
U4 & $E_4$: Chernoff upper-tail on $s$    & $\delta / (16 m)$ & $|\Hset| \le m$   & $\le \delta / 16$ \\
U5 & \cref{lem:inclusion} joint $E$        & ---                 & ---                 & $\le \delta / 4$ \\
U6 & $F$: two-sided MP on $\udep$        & $\delta / (2 m)$  & $m$               & $\delta / 2$ \\
U7 & \cref{thm:joint-cert} joint $E \cap F$ & ---                 & ---                 & $\le 3 \delta / 4 \le \delta$ \\
\bottomrule
\end{tabular}
\end{table}

\subsection{Clopper--Pearson relative-error inversion}

\begin{lemma}[CP relative-error LCB]\label{sublemma:cp-inversion}
Let $\delta' \in (0,1)$, $\pmin \in (0,1]$, and $s \sim \mathrm{Bin}(\ncert, p)$ with $p \ge \pmin$. Define $\pLCB(s; \ncert, \delta') := \sup\{ q \in [0,1] : \Prob_q[\mathrm{Bin}(\ncert, q) \ge s] \le \delta' \}$, with $\pLCB(0; \ncert, \delta') := 0$. If $\ncert \ge 32 \log(2/\delta') / \pmin$, then with probability at least $1 - \delta'$, $\pLCB(s; \ncert, \delta') \ge p / 2$. Moreover, on the deterministic event $\{s \ge (3/4) \ncert p\}$, the conclusion $\pLCB(s; \ncert, \delta') \ge p/2$ holds without spending additional probability under the same sample-size condition.
\end{lemma}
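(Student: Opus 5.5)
The proof splits into two parts: a deterministic part and a probabilistic part. The deterministic part shows that on the event $\{s \ge (3/4)\ncert p\}$ we have $\pLCB(s;\ncert,\delta') \ge p/2$. The probabilistic part shows that this event has probability at least $1-\delta'$ under the sample-size hypothesis.

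\textbf{Monotonicity reduction.} The map $q \mapsto \Prob_q[\mathrm{Bin}(\ncert,q)\ge s]$ is nondecreasing in $q$, by the standard stochastic ordering of binomials. Hence the set in the definition of $\pLCB$ is an interval $[0,q^*]$ or $[0,q^*)$. To show $\pLCB \ge p/2$ it therefore suffices to exhibit the single value $q = p/2$ and check that
\[
\Prob_{p/2}[\mathrm{Bin}(\ncert,p/2)\ge s]\le \delta'.
\]

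\textbf{Deterministic step.} On $\{s \ge (3/4)\ncert p\}$ it suffices to bound $\Prob_{p/2}[\mathrm{Bin}(\ncert,p/2)\ge (3/4)\ncert p]$. The threshold equals $(1+1/2)\mu$ with $\mu = \ncert p/2$. The multiplicative Chernoff upper tail $\Prob[X\ge(1+\epsilon)\mu]\le \exp(-\epsilon^2\mu/3)$ with $\epsilon = 1/2$ gives $\exp(-\ncert p/24)$. Since $p \ge \pmin$ and $\ncert \ge 32\log(2/\delta')/\pmin$, this is at most $\exp(-(4/3)\log(2/\delta')) \le \delta'/2 \le \delta'$. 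Consequently $p/2$ lies in the defining set and $\pLCB \ge p/2$. No randomness in $s$ is used here, which is the ``without spending additional probability'' clause. Note also that $s \ge (3/4)\ncert p > 0$, so the degenerate case $s = 0$ with $\pLCB := 0$ does not arise.

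\textbf{Probabilistic step.} Under the true $p$, the multiplicative Chernoff lower tail gives
\[
\Prob_p[s < (1-1/4)\ncert p] \le \exp(-(1/4)^2\ncert p/2) = \exp(-\ncert p/32).
\]
Using $\ncert p \ge \ncert\pmin \ge 32\log(2/\delta')$, this is at most $\delta'/2 \le \delta'$. Combining this with the deterministic step yields the claim with probability at least $1-\delta'$.

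The main point to watch is the constant bookkeeping, in particular that the $32$ in the hypothesis covers the weaker of the two Chernoff exponents. The lower-tail exponent $1/32$ is the binding one, and it yields exactly $\log(2/\delta')$. The upper-tail exponent $1/24$ then has slack. I would double-check the exact Chernoff forms, namely $\epsilon^2\mu/2$ for the lower tail and $\epsilon^2\mu/(2+\epsilon)$ or $\epsilon^2\mu/3$ for the upper tail with $\epsilon\le1$. I would also double-check the monotonicity of the binomial tail in $q$, so that the supremum is not attained below $p/2$.
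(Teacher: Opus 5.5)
Your proof is correct and takes the same approach as the paper: a Chernoff lower-tail event $\{s \ge \tfrac34 \ncert p\}$ with failure probability at most $\exp(-\ncert p/32) \le \delta'/2$, combined with a Chernoff upper-tail bound at $q = p/2$ and the Clopper--Pearson characterisation. The only difference is that you use the cruder upper-tail form $\exp(-\varepsilon^2\mu/3)$, which gives $\exp(-\ncert p/24)$ rather than the paper's $\exp(-\ncert p/20)$; this still suffices. Also, $p/2$ belonging to the defining set already gives $\pLCB \ge p/2$ by the definition of the supremum, so your monotonicity check is not needed for this direction.
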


\begin{proof}
By Chernoff lower-tail with $\varepsilon = 1/4$, $\Prob(s \le \frac{3}{4} \ncert p) \le \exp(-\ncert p / 32) \le \delta'/2$ under the sample-size condition, defining an event $E_s := \{s \ge (3/4) \ncert p\}$. Setting $q := p/2$, the upper tail $\Prob_q[\mathrm{Bin}(\ncert, q) \ge \frac{3}{4} \ncert p] \le \exp(-\ncert q / 10) = \exp(-\ncert p / 20) \le \delta'/2$ follows by Chernoff upper-tail at $\varepsilon = 1/2$. For $s \ge 1$, the Clopper--Pearson characterisation $\pLCB(s; \ncert, \delta') \ge q \Leftrightarrow \Prob_q[\mathrm{Bin}(\ncert, q) \ge s] \le \delta'$ (the binomial upper tail is monotone non-decreasing in $q$, so the CP LCB is the largest $q$ satisfying the inequality; the boundary case $s = 0$, where $\pLCB := 0$, is excluded on $E_s$ because the sample-size condition forces $(3/4) \ncert p > 0$) closes the chain: on $E_s$, $\mathrm{Bin}(\ncert, q) \ge s$ implies $\mathrm{Bin}(\ncert, q) \ge \frac{3}{4} \ncert p$, and the upper-tail bound gives $\Prob_q[\mathrm{Bin}(\ncert, q) \ge s] \le \delta'/2 \le \delta'$. Note that the second clause of the lemma (the deterministic upgrade) uses only the Step-2 Chernoff calculation and the CP monotonicity characterisation, not the Step-1 high-probability event; this is what is invoked in step~(f) of \cref{lem:inclusion}.
\end{proof}

\subsection{Variance-adaptive inclusion lemma (full proof)}

\begin{lemma}[Variance-adaptive inclusion]\label{lem:inclusion}
Under the assumptions of \cref{thm:joint-cert}, conditional on $\Dtr, \Dtune$ (so $g, \grid, \Hset$ and every $\pacc(\lambda, \tau)$ are fixed), with probability at least $1 - \delta / 4$ over the draw of $\Dcert$, for every $(\lambda, \tau) \in \grid$:
\begin{itemize}
\item \emph{Inclusion:} if $\Rsel(\lambda, \tau) \le \alpha - \gammar$ and $\pacc(\lambda, \tau) \ge 2 \pmin$, then $(\lambda, \tau) \in \Ghat$.
\item \emph{Validity:} if $(\lambda, \tau) \in \Ghat$, then $\Rsel(\lambda, \tau) \le \alpha$ and $\pacc(\lambda, \tau) \ge \pmin$.
\end{itemize}
\end{lemma}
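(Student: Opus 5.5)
The plan is to define one joint good event $E := E_1 \cap E_2 \cap E_3 \cap E_4$ and then derive both bullets deterministically on $E$. The four families are those of rows U1--U4 of \cref{tab:delta-ledger}, each at per-event level $\delta' := \delta/(16m)$.
\begin{itemize}
\item $E_1$: for every pair in $\grid$, the two-sided Maurer--Pontil bound on $Z$ holds. That is, $|\Zbar - \E Z| \le r_Z := \sqrt{2\SigmaZ\log(4/\delta')/\ncert} + 7B\log(4/\delta')/(3(\ncert-1))$, where $\log(4/\delta') = \log(64m/\delta)$ matches the radius in \cref{alg:certify}. The range of $Z$ is at most $B$, since $|L-\alpha| \le B$ and $B \ge 1$.
\item $E_2$: for every pair, Clopper--Pearson coverage $\pLCB \le \pacc$ holds.
\item $E_3$: for every pair in the deterministic set $\Hset$, the Chernoff lower tail $s \ge \tfrac34 \ncert\pacc$ holds.
\item $E_4$: for every pair in $\Hset$, the Chernoff upper tail $s \le \tfrac54 \ncert\pacc$ holds.
\end{itemize}
By \cref{ass:selector}, $\Hset$ is fixed conditionally on $(\Dtr,\Dtune)$, so the union over it is legitimate. \cref{ass:samplesize} gives $\ncert\pacc \ge \ncert\pmin \ge 32\log(32m/\delta) = 32\log(2/\delta')$ on $\Hset$, and this makes each Chernoff tail at most $\delta'$. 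A union bound then gives $\Prob(E^c) \le 4m\delta' = \delta/4$.

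\textbf{Validity.} Take a pair in $\Ghat$. From $\pLCB \ge \pmin$ and $E_2$ we get $\pacc \ge \pmin > 0$, so $\Rsel$ is defined. From $\EB = \Zbar + r_Z \le 0$ and $E_1$ we get $\E Z \le \Zbar + r_Z \le 0$. Since $\E Z = \E[A(L-\alpha)] = \pacc(\Rsel - \alpha)$, dividing by $\pacc > 0$ gives $\Rsel \le \alpha$.

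\textbf{Inclusion, acceptance leg.} Take a pair with $\pacc \ge 2\pmin$. It lies in $\Hset$, so $E_3$ gives $s \ge \tfrac34\ncert\pacc$. I then invoke the deterministic clause of \cref{sublemma:cp-inversion} with $p = \pacc$ and $\delta' = \delta/(16m)$. Its sample-size precondition $\ncert \ge 32\log(2/\delta')/p$ follows from $(\star)$ because $p \ge \pmin$. The clause yields $\pLCB \ge \pacc/2 \ge \pmin$.

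\textbf{Inclusion, risk leg.} The hypothesis $\Rsel \le \alpha - \gammar$ gives $\E Z \le -\pacc\gammar$. On $E_1$ this yields $\EB \le \E Z + 2r_Z$, so it suffices to show $2r_Z \le \pacc\gammar$.
\begin{itemize}
\item \emph{Variance bridge.} Since $Z_i^2 \le B^2 A_i$, the Bessel-corrected variance satisfies $\SigmaZ \le \frac{1}{\ncert-1}\sum_i Z_i^2 \le B^2 s/(\ncert-1)$. On $E_4$ this is at most $\tfrac54 B^2\ncert\pacc/(\ncert-1) \le 2B^2\pacc$, up to the harmless factor $\ncert/(\ncert-1)$, which $(\star)$ keeps close to $1$.
\item \emph{Square-root term.} Writing $\ell := \log(64m/\delta)$, this bound gives $2\sqrt{2\SigmaZ\ell/\ncert} \le 4B\sqrt{\pacc\ell/\ncert}$. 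This is at most $4B\pacc\sqrt{\ell/(\ncert\pmin)}$ exactly because $\pacc/\pmin \ge 1$.
\item \emph{Linear term.} Similarly $14B\ell/(3(\ncert-1)) \le \pacc \cdot 14B\ell/(3\pmin(\ncert-1))$.
\end{itemize}
Summing the two pieces gives $2r_Z \le \pacc\gammar$, hence $\EB \le 0$ and the pair lies in $\Ghat$.

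The main obstacle is the variance bridge. It needs the data-dependent $\SigmaZ$ to be $O(B^2\pacc)$, which is exactly what converts the naive $1/\pmin$ dependence into the $1/\sqrt{\pmin}$ dependence in $\gammar$. The bridge rests on the upper-tail event $E_4$, and so it is available only on the deterministic set $\Hset$. I expect the constant bookkeeping to be delicate: the $\tfrac54$ Chernoff factor and the $\ncert/(\ncert-1)$ correction must fit inside the $4B$ leading constant. If they do not fit cleanly, I will absorb the slack using the extra factor $\sqrt{\pacc/\pmin} \ge \sqrt2$ that is available under $\pacc \ge 2\pmin$.
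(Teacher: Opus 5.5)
Your architecture matches the paper's: the same four event families at level $\delta/(16m)$, with $E_3,E_4$ unioned only over the deterministic $\Hset$. The validity argument is the same, as are the deterministic clause of \cref{sublemma:cp-inversion} for the acceptance leg and the sufficient condition $2\etaZ \le \pacc\gammar$ for the risk leg. One step does not go through as written, however: the probability bound for your $E_4 = \{s \le \tfrac54 \ncert\pacc\}$.

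Chernoff tails are not symmetric. At $\varepsilon = 1/4$ the lower tail is $\exp(-\varepsilon^2\ncert p/2) = \exp(-\ncert p/32)$, which is exactly what $(\star)$ is calibrated for. The upper tail, by contrast, is only $\exp(-\varepsilon^2\ncert p/(2+\varepsilon)) = \exp(-\ncert p/36)$, or $\exp(-\ncert p/48)$ in the $\varepsilon \le 1$ form. For a pair with $\pacc$ near $\pmin$, all that $(\star)$ supplies is $\ncert p \ge 32\log(32m/\delta)$. That guarantees $\exp(-\ncert p/36) \le \delta/(16m)$ only when $m/\delta \le 16$, and both of the paper's operating points ($m/\delta = 150$ and $700$) violate this. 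So your assertion that $(\star)$ makes every Chernoff tail at most $\delta'$ is unsupported for $E_4$, and the $\delta/4$ ledger does not close.

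Your anticipated obstacle is on the wrong side. The variance bridge has ample slack; the binding constraint is the tail probability. There are two easy repairs.
\begin{itemize}
\item Use the paper's threshold, $E_4 := \{s \le \tfrac32\ncert\pacc\}$. Its tail is $\exp(-\ncert p/10)$, which is at most $\delta/(16m)$ on all of $\Hset$ because $32\log(32m/\delta) \ge 10\log(16m/\delta)$. The bridge still closes, since $\SigmaZ \le B^2 s/(\ncert-1) \le \tfrac32\cdot\tfrac{\ncert}{\ncert-1}B^2\pacc \le 2B^2\pacc$ once $\ncert \ge 4$ (the paper writes this as $\tfrac43\cdot\tfrac32$).
\item Alternatively, keep $\tfrac54$ but union $E_4$ only over the deterministic set $\{\pacc \ge 2\pmin\}$, which is all the inclusion leg uses. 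There $\ncert\pacc \ge 64\log(32m/\delta)$, so $\exp(-\ncert\pacc/36) \le (\delta/(32m))^{16/9} \le \delta/(16m)$.
\end{itemize}
Your fallback via $\sqrt{\pacc/\pmin} \ge \sqrt2$ would not help here. It adjusts deterministic constants, not the failure probability.

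A minor point: $|L-\alpha| \le B$ alone only gives $Z \in [-B,B]$, a range of $2B$. The range-$B$ fact needed for the Maurer--Pontil radius comes instead from $Z \in [-\alpha, B-\alpha]$, an interval of length $B$ that contains $0$.
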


\begin{proof}
Define the deterministic eligible set $\Hset := \{(\lambda, \tau) \in \grid : \pacc(\lambda, \tau) \ge \pmin\}$, with $|\Hset| \le m$. We union four event families at per-event level $\delta / (16 m)$, two of which we union over all $m$ pairs and two only over $\Hset$.

\paragraph{Event families.} For each $(\lambda, \tau)$:
\begin{enumerate}
\item[(U1)] $E_1(\lambda, \tau) := \{|\Zbar(\lambda, \tau) - \E[Z(\lambda, \tau)]| \le \etaZ(\lambda, \tau)\}$, with $\etaZ := \sqrt{2 \SigmaZ \log(64 m / \delta) / \ncert} + 7 B \log(64 m / \delta) / (3 (\ncert - 1))$. By two-sided Maurer--Pontil applied as the union of two one-sided bounds at $\delta / (32 m)$ per tail, $\Prob(E_1) \ge 1 - \delta / (16 m)$; the log argument $\log(64 m / \delta)$ tracks this two-tail union. $E_1$ holds for arbitrary $\pacc$ and is unioned over all $m$ pairs.
\item[(U2)] $E_2(\lambda, \tau) := \{\pLCB(s(\lambda, \tau); \ncert, \delta / (16 m)) \le \pacc(\lambda, \tau)\}$. By the definition of the Clopper--Pearson LCB, $\Prob(E_2) \ge 1 - \delta / (16 m)$. $E_2$ holds for arbitrary $\pacc \in [0, 1]$ and is unioned over all $m$ pairs.
\item[(U3)] $E_3(\lambda, \tau) := \{s(\lambda, \tau) \ge \tfrac{3}{4} \ncert \pacc(\lambda, \tau)\}$, $(\lambda, \tau) \in \Hset$. By Step 1 of \cref{sublemma:cp-inversion} applied at level $\delta'' = \delta / (8 m)$ (so the Step-1 failure is $\delta''/2 = \delta / (16 m)$ and the sample-size precondition $\ncert \ge 32 \log(16 m / \delta) / \pmin$ is implied by $(\star)$), $\Prob(E_3) \ge 1 - \delta / (16 m)$ when $\pacc \ge \pmin$. We union only over $\Hset$.
\item[(U4)] $E_4(\lambda, \tau) := \{s(\lambda, \tau) \le \tfrac{3}{2} \ncert \pacc(\lambda, \tau)\}$, $(\lambda, \tau) \in \Hset$. By Chernoff upper-tail at $\varepsilon = 1/2$, $\Prob_p[s \ge \tfrac{3}{2} \ncert p] \le \exp(-\ncert p / 10)$. For $(\lambda, \tau) \in \Hset$ and under $(\star)$, $\ncert p \ge \ncert \pmin \ge 32 \log(32 m / \delta) \ge 10 \log(16 m / \delta)$, so $\exp(-\ncert p / 10) \le \delta / (16 m)$. We union only over $\Hset$.
\end{enumerate}
The joint event $E := \bigcap_{(\lambda, \tau)} (E_1 \cap E_2) \cap \bigcap_{(\lambda, \tau) \in \Hset} (E_3 \cap E_4)$ has failure $\le (2 m + 2 |\Hset|) \delta / (16 m) \le 4 m \cdot \delta / (16 m) = \delta / 4$.

\paragraph{Validity.} Fix $(\lambda, \tau) \in \Ghat$. By the definition of $\Ghat$, $\EB(\lambda, \tau) = \Zbar + \etaZ \le 0$ and $\pLCB(\lambda, \tau) \ge \pmin$. On $E_1$, $\E[Z] \le \Zbar + \etaZ \le 0$, i.e.\ $\E[A (L - \alpha)] \le 0$. On $E_2$, $\pacc \ge \pLCB \ge \pmin > 0$; dividing gives $\Rsel - \alpha \le 0$. Hence $\Rsel \le \alpha$ and $\pacc \ge \pmin$. \checkmark

\paragraph{Inclusion.} Fix $(\lambda, \tau)$ with $\Rsel \le \alpha - \gammar$ and $\pacc \ge 2 \pmin$, so $(\lambda, \tau) \in \Hset$. We show in turn:
\begin{enumerate}
\item[(a)] $\E[Z] = \pacc (\Rsel - \alpha) \le - \pacc \gammar$.
\item[(b)] Since $B \ge 1$ (\cref{ass:bounded}) and $\alpha \in (0,1)$ (\cref{ass:params}), $|L_i - \alpha| \le \max(\alpha, B - \alpha) \le B$, hence $Z_i^2 = A_i (L_i - \alpha)^2 \le B^2 A_i$ and $\sum_i Z_i^2 \le B^2 \ncert \paccHat$. By the standard bound $\SigmaZ = (1 / (\ncert - 1)) (\sum_i Z_i^2 - \ncert \Zbar^2) \le (1 / (\ncert - 1)) \sum_i Z_i^2$ and $\ncert / (\ncert - 1) \le 4 / 3$ under $(\star)$ (which forces $\ncert \ge 32 \log 32 \approx 110$), we obtain $\SigmaZ \le \tfrac{4}{3} B^2 \paccHat$.
\item[(c)] On $E_4$, $\paccHat \le \tfrac{3}{2} \pacc$, so $\SigmaZ \le 2 B^2 \pacc$.
\item[(d)] Substituting into the MP radius, $\etaZ \le 2 B \sqrt{\pacc \log(64 m / \delta) / \ncert} + 7 B \log(64 m / \delta) / (3 (\ncert - 1))$.
\item[(e)] On $E_1$, $\EB = \Zbar + \etaZ \le \E[Z] + 2 \etaZ \le - \pacc \gammar + 2 \etaZ$; since this is an upper bound on $\EB$, a \emph{sufficient} condition for $\EB \le 0$ is $\pacc \gammar \ge 2 \etaZ$. Substituting (d),
\[
\pacc \gammar \;\ge\; 4 B \sqrt{\frac{\pacc \log(64 m / \delta)}{\ncert}} + \frac{14 B \log(64 m / \delta)}{3 (\ncert - 1)}.
\]
Dividing by $\pacc$ and using $\pacc \ge \pmin$ to bound $1/\sqrt{\pacc} \le 1/\sqrt{\pmin}$ and $1/\pacc \le 1/\pmin$, the right-hand side is at most the displayed expression for $\gammar$, so $\EB \le 0$ holds whenever
\[
\gammar \ge \frac{4 B}{\sqrt{\pmin}} \sqrt{\frac{\log(64 m / \delta)}{\ncert}} + \frac{14 B}{3 \pmin} \cdot \frac{\log(64 m / \delta)}{\ncert - 1},
\]
which is exactly the definition of $\gammar$ given in \cref{thm:joint-cert}. Hence $\EB \le 0$.
\item[(f)] On $E_3$, $s \ge (3/4) \ncert \pacc$. Set $\delta' := \delta / (16 m)$ and $q := \pacc / 2$. The sample-size precondition $\ncert \ge 32 \log(2/\delta') / \pmin = 32 \log(32 m / \delta) / \pmin$ is exactly $(\star)$, so the Step-2 Chernoff calculation of \cref{sublemma:cp-inversion} gives $\Prob_q[\mathrm{Bin}(\ncert, q) \ge (3/4) \ncert \pacc] \le \delta'$, hence $\Prob_q[\mathrm{Bin}(\ncert, q) \ge s] \le \delta'$. By the CP characterisation, $\pLCB(s; \ncert, \delta') \ge q = \pacc / 2$ deterministically on $E_3$ (no additional probability is spent beyond $E_3$). The inclusion hypothesis $\pacc \ge 2 \pmin$ then gives $\pLCB \ge \pacc / 2 \ge \pmin$, so $(\lambda, \tau) \in \Ghat$. \checkmark
\end{enumerate}
Both directions hold on $E$, which has probability $\ge 1 - \delta / 4$.
\end{proof}

\begin{remark}
The inclusion direction uses $\pacc \ge \pmin$ for the $\gammar$ bound but $\pacc \ge 2 \pmin$ for step (f). The factor of two in the margin condition is therefore the derived cost of the Clopper--Pearson inversion.
\end{remark}

\subsection{Utility closeness lemma (full proof)}

\begin{lemma}[Two-sided closeness on utility]\label{lem:utility-closeness}
Suppose \cref{ass:iid,ass:bounded,ass:grid,ass:params,ass:threesplit,ass:selector} hold and $\ncert \ge 2$ (so the Bessel-corrected $\SigmaU$ and the Maurer--Pontil radius $\etaU$ are well-defined; this is implied by \cref{ass:samplesize} whenever the latter is assumed). Conditional on the training and tuning splits, with probability at least $1 - \delta / 2$ over $\Dcert$, for every $(\lambda, \tau) \in \grid$,
\begin{equation}
\Udep(\lambda, \tau) - 2 \etaU(\lambda, \tau) \;\le\; \ULCB(\lambda, \tau) \;\le\; \Udep(\lambda, \tau).
\label{eq:util-closeness}
\end{equation}
In particular, $|\ULCB(\lambda, \tau) - \Udep(\lambda, \tau)| \le 2 \etaU(\lambda, \tau) \le 2 \gammau$ uniformly over the grid.
\end{lemma}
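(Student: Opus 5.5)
The plan is to use the two-sided Maurer--Pontil (empirical-Bernstein) inequality for each grid pair, conditional on $(\Dtr, \Dtune)$, and then take a union bound over the $m$ pairs. Set $\etaU(\lambda, \tau) := \sqrt{2 \SigmaU \log(8 m / \delta) / \ncert} + 7 (c + V) \log(8 m / \delta) / (3 (\ncert - 1))$, so that $\ULCB = \ubar - \etaU$ as in \cref{alg:certify}.

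\textbf{Step 1 (i.i.d.\ bounded summands).} Fix a pair. Conditional on $(\Dtr, \Dtune)$, \cref{ass:threesplit,ass:selector} make $g$, $\grid$ and the maps $A$, $v$ fixed and cert-independent. Hence the $u^{\mathrm{dep}}_i(\lambda, \tau) = A_i v_i - c(1 - A_i)$ are i.i.d.\ by \cref{ass:iid}, and they are measurable by \cref{ass:grid}. They lie in $[-c, V]$, an interval of length $c + V$, by \cref{ass:bounded}. Their common mean is $\Udep(\lambda, \tau)$.

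\textbf{Step 2 (per-pair two-sided bound).} Rescale to $[0,1]$ by the affine map $x \mapsto (x + c)/(c + V)$. Apply the one-sided Maurer--Pontil bound (variance term $\sqrt{2 \hat\sigma^2 \log(2/\delta'')/n}$, range term $7 \log(2/\delta'')/(3(n-1))$) at level $\delta/(4m)$ to each tail, then rescale back. The $\log(2/\delta'')$ argument becomes $\log(8 m / \delta)$, and the range factor becomes $c + V$. This gives $|\ubar - \Udep| \le \etaU$ with failure probability at most $\delta/(2m)$, which is ledger entry U6. The case $c + V = 0$ is trivial, since then the summands are constant and both sides of \eqref{eq:util-closeness} coincide.

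\textbf{Step 3 (union and rearrangement).} A union bound over the $m$ pairs gives an event $F$ with $\Prob(F) \ge 1 - \delta/2$ on which $|\ubar - \Udep| \le \etaU$ holds for every pair. On $F$ we get $\ULCB = \ubar - \etaU \le \Udep$, and also $\ULCB \ge \Udep - 2\etaU$. This is exactly \eqref{eq:util-closeness}. Taking the maximum over the grid gives the uniform $2\gammau$ statement.

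\textbf{Main obstacle: constant bookkeeping in Step 2.} The paper's radius $\etaU$ uses $\log(8m/\delta)$, not $\log(16m/\delta)$. So I must check that the stated Maurer--Pontil form, $\log(4/\delta')$ at two-sided level $\delta'$ as in the Implementation paragraph, with $\delta' = \delta/(2m)$, reproduces $\log(8m/\delta)$. It does: $4/\delta' = 8m/\delta$. I will invoke that two-sided form directly rather than re-deriving the tail split. The other thing to verify is that the Bessel-corrected $\SigmaU$ scales by $(c+V)^2$ under the rescaling, so that the variance term transforms correctly.
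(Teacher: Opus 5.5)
Your proposal is correct and follows the paper's proof essentially step for step. Both arguments build a per-pair two-sided Maurer--Pontil event at level $\delta/(2m)$ from two one-sided bounds at $\delta/(4m)$, which yields the $\log(8m/\delta)$ argument and range factor $c+V$; both then take a union bound over the $m$ grid pairs for total failure $\delta/2$ and rearrange $\ULCB = \ubar - \etaU$ to obtain both inequalities. Your explicit treatment of the conditional i.i.d.\ structure, the $(c+V)^2$ rescaling of $\SigmaU$, and the degenerate case $c+V=0$ spells out details the paper leaves implicit, without changing the argument.
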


\begin{proof}
For each $(\lambda, \tau) \in \grid$, define the two-sided Maurer--Pontil event
$
F(\lambda, \tau) := \{ |\ubar(\lambda, \tau) - \Udep(\lambda, \tau)| \le \etaU(\lambda, \tau) \}.
$
The per-sample $\udep_i = A_i v_i - c (1 - A_i)$ lies in $[-c, V]$, so its range is $c + V$. The two-sided MP bound at total level $\delta / (2 m)$, applied as the union of two one-sided MP bounds at $\delta / (4 m)$ per tail, gives $\Prob(F(\lambda, \tau)) \ge 1 - \delta / (2 m)$ with the log argument $\log(8 m / \delta)$ that appears in $\etaU$. Unioning over the $m$ grid pairs, the joint event $F := \bigcap_{(\lambda, \tau)} F(\lambda, \tau)$ has failure $\le m \cdot \delta / (2 m) = \delta / 2$. On $F$, $\bar u - \etaU \le \Udep \le \bar u + \etaU$ for every pair; substituting $\ULCB := \bar u - \etaU$ gives both inequalities in (\ref{eq:util-closeness}): the upper $\ULCB \le \Udep$ from $\bar u \le \Udep + \etaU$, and the lower $\ULCB \ge \Udep - 2 \etaU$ from $\bar u \ge \Udep - \etaU$.
\end{proof}

\subsection{Proof of Theorem 1}

\begin{proof}[Proof of \cref{thm:joint-cert}]
Let $E$ be the event of \cref{lem:inclusion} (probability $\ge 1 - \delta / 4$) and $F$ the event of \cref{lem:utility-closeness} (probability $\ge 1 - \delta / 2$). By a union bound, $\Prob(E \cap F) \ge 1 - (\delta / 4 + \delta / 2) = 1 - 3 \delta / 4 \ge 1 - \delta$. We work on $E \cap F$.

\paragraph{Case A: $\Ghat = \emptyset$.} \cref{alg:certify} returns \textsc{Infeasible}; the theorem statement is satisfied vacuously by the disjunction.

\paragraph{Case B: $\Ghat \ne \emptyset$ and $(\hat \lambda, \hat \tau) = \argmax_{(\lambda, \tau) \in \Ghat} \ULCB(\lambda, \tau)$.}

\emph{Validity and floor.} Since $(\hat \lambda, \hat \tau) \in \Ghat$, the validity direction of \cref{lem:inclusion} gives $\Rsel(\hat \lambda, \hat \tau) \le \alpha$ and $\pacc(\hat \lambda, \hat \tau) \ge \pmin$.

\emph{Margin-oracle utility.} If $\Mset(\alpha - \gammar, \pmin) = \emptyset$, then $\Umargin(\alpha, \pmin) = \sup_\emptyset = -\infty$ and the inequality $\Udep(\hat \lambda, \hat \tau) \ge -\infty$ holds trivially. Otherwise, fix any $(\lambda', \tau') \in \Mset(\alpha - \gammar, \pmin)$; by the inclusion direction of \cref{lem:inclusion}, $(\lambda', \tau') \in \Ghat$. Chain three inequalities:
\begin{align}
\Udep(\hat \lambda, \hat \tau) &\;\ge\; \ULCB(\hat \lambda, \hat \tau) &&\text{(\cref{lem:utility-closeness}, upper)} \label{eq:thm-T3} \\
                                &\;\ge\; \ULCB(\lambda', \tau')      &&\text{($\argmax$ at $(\hat \lambda, \hat \tau)$)} \label{eq:thm-T1} \\
                                &\;\ge\; \Udep(\lambda', \tau') - 2 \gammau &&\text{(\cref{lem:utility-closeness}, lower)} \label{eq:thm-T2}
\end{align}
Combining, $\Udep(\hat \lambda, \hat \tau) \ge \Udep(\lambda', \tau') - 2 \gammau$. Taking the supremum over $(\lambda', \tau') \in \Mset(\alpha - \gammar, \pmin)$ gives $\Udep(\hat \lambda, \hat \tau) \ge \Umargin(\alpha, \pmin) - 2 \gammau$.
\end{proof}

\subsection{The utility leg in three rungs}
\label{sec:utility-ladder}

\Cref{thm:joint-cert} states the utility guarantee in its most ambitious, external-oracle form (\ref{eq:thm-T2}). We separate, on the same event $E \cap F$, three nested utility guarantees that the certified pair enjoys, from the always-available to the most ambitious. The first two hold on \emph{every} feasible run; only the third can be vacuous, and exactly when its oracle set is empty.

\begin{enumerate}
\item \emph{Absolute certified utility.} The upper direction of \cref{lem:utility-closeness} gives $\Udep(\hat\lambda, \hat\tau) \ge \ULCB(\hat\lambda, \hat\tau)$: \cref{alg:certify} reports a finite-sample lower bound on the deployed policy's utility, non-vacuous whenever the certifier is feasible.
\item \emph{Certified-set optimality (\cref{cor:gset-opt}).} The deployed pair is within $2\gammau$ of the best deployment utility \emph{among all certified pairs} $\Ghat$, a comparator that is non-empty whenever \cref{alg:certify} does not abstain.
\item \emph{External margin-oracle optimality (\cref{thm:joint-cert}).} The deployed pair is within $2\gammau$ of the best utility over an \emph{external}, certificate-independent oracle of risk-certifiable policies. This is the strongest statement, but is informative only when the oracle set $\Mset(\alpha - \gammar, \pmin)$ is non-empty, i.e.\ when $\gammar < \alpha$; \cref{cor:va-oracle} enlarges this set variance-adaptively.
\end{enumerate}

\begin{corollary}[Certified-set utility optimality]\label{cor:gset-opt}
Under the assumptions of \cref{thm:joint-cert}, on the event $E \cap F$ of probability at least $1 - \delta$, if $\Ghat \ne \emptyset$ the returned pair satisfies both
\[
\Udep(\hat\lambda, \hat\tau) \;\ge\; \ULCB(\hat\lambda, \hat\tau)
\qquad\text{and}\qquad
\Udep(\hat\lambda, \hat\tau) \;\ge\; \sup_{(\lambda, \tau) \in \Ghat} \Udep(\lambda, \tau) \;-\; 2 \gammau .
\]
Unlike $\Umargin$, the comparator $\Ghat$ is non-empty on every feasible run, so both bounds are non-vacuous whenever \cref{alg:certify} does not return \textsc{Infeasible}.
\end{corollary}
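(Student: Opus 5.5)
The plan is to reuse the event $E \cap F$ from the proof of \cref{thm:joint-cert}, which has probability at least $1 - 3\delta/4 \ge 1 - \delta$ by \cref{lem:inclusion} and \cref{lem:utility-closeness}. All that remains is a deterministic argument on that event, in the case $\Ghat \ne \emptyset$ where \cref{alg:certify} returns $(\hat\lambda, \hat\tau) = \argmax_{(\lambda,\tau)\in\Ghat} \ULCB(\lambda,\tau)$.

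For the first inequality I will invoke the upper direction of \eqref{eq:util-closeness}, which holds uniformly over the grid on $F$. Since $(\hat\lambda,\hat\tau) \in \Ghat \subseteq \grid$, it gives $\Udep(\hat\lambda,\hat\tau) \ge \ULCB(\hat\lambda,\hat\tau)$. The uniformity of $F$ over all $m$ pairs is what lets us evaluate at the data-dependent pair without any further post-selection correction.

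For the second inequality I will repeat the three-step chain \eqref{eq:thm-T3}--\eqref{eq:thm-T2}, but with the comparator taken inside $\Ghat$ itself rather than in $\Mset(\alpha-\gammar,\pmin)$. Fix any $(\lambda,\tau) \in \Ghat$. By the upper closeness bound, $\Udep(\hat\lambda,\hat\tau) \ge \ULCB(\hat\lambda,\hat\tau)$. Because $(\hat\lambda,\hat\tau)$ maximises $\ULCB$ over $\Ghat$, $\ULCB(\hat\lambda,\hat\tau) \ge \ULCB(\lambda,\tau)$. By the lower closeness bound, $\ULCB(\lambda,\tau) \ge \Udep(\lambda,\tau) - 2\etaU(\lambda,\tau) \ge \Udep(\lambda,\tau) - 2\gammau$, where the last step uses the definition of $\gammau$ as the maximum of $\etaU$ over $\grid$. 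Taking the supremum over the finite nonempty set $\Ghat$ then gives the claim. Here the comparator set is random, so I need to check that this is harmless. It is, because $F$ holds simultaneously at every grid pair, so any data-dependent subset is covered. Note that the inclusion direction of \cref{lem:inclusion} is not needed for this step at all.

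The last sentence of the corollary is the only step needing care. On a feasible run $\Ghat$ is a finite nonempty set, so the supremum is a maximum over real numbers and is finite, unlike $\sup_\emptyset = -\infty$ for the margin oracle. Likewise $\ULCB(\hat\lambda,\hat\tau)$ is a finite real number. So neither bound degenerates to $-\infty$. I expect no genuine obstacle; the main thing to state explicitly is the uniformity over random subsets of $\grid$.
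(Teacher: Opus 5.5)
Your proposal is correct and is essentially the paper's proof: the upper direction of \cref{lem:utility-closeness} on $F$ gives the absolute bound. For any $(\lambda,\tau)\in\Ghat$, the chain $\Udep(\hat\lambda,\hat\tau)\ge\ULCB(\hat\lambda,\hat\tau)\ge\ULCB(\lambda,\tau)\ge\Udep(\lambda,\tau)-2\gammau$, followed by a supremum over $\Ghat$, gives the certified-set bound. Your added remarks are accurate and make explicit what the paper leaves implicit: only $F$ is used (the inclusion direction of \cref{lem:inclusion} is not needed), and the uniformity of $F$ over all grid pairs is what makes the data-dependent comparator set $\Ghat$ harmless.
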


\begin{proof}
On $F$ (\cref{lem:utility-closeness}, upper direction), $\Udep(\hat\lambda, \hat\tau) \ge \ULCB(\hat\lambda, \hat\tau)$, the first inequality. For the second, fix any $(\lambda, \tau) \in \Ghat$. Since $(\hat\lambda, \hat\tau) = \argmax_{(\lambda,\tau) \in \Ghat} \ULCB$, we have $\ULCB(\hat\lambda, \hat\tau) \ge \ULCB(\lambda, \tau)$; and by the lower direction of \cref{lem:utility-closeness}, $\ULCB(\lambda, \tau) \ge \Udep(\lambda, \tau) - 2 \etaU(\lambda, \tau) \ge \Udep(\lambda, \tau) - 2 \gammau$. Chaining the three inequalities and taking the supremum over $(\lambda, \tau) \in \Ghat$ gives the claim. The comparator differs from $\Umargin$ only in ranging over the realised certified set rather than an external population set; the same $\argmax$-and-closeness argument underlies both.
\end{proof}

\begin{corollary}[Variance-adaptive margin oracle]\label{cor:va-oracle}
For each pair let $\sigma_Z^2(\lambda, \tau) := \mathrm{Var}(Z(\lambda, \tau))$ be the population variance of $Z = A(L - \alpha)$, and define the variance-adaptive margin
\[
\gammar^{\mathrm{va}}(\lambda, \tau) \;:=\; \frac{2}{\pacc(\lambda, \tau)} \left[ \sqrt{\frac{2 \sigma_Z^2(\lambda, \tau) \log(64 m / \delta)}{\ncert}} + \frac{c_0 B \log(64 m / \delta)}{\ncert - 1} \right]
\]
for an absolute constant $c_0$, and the variance-adaptive oracle
$\Umarginva := \sup \{ \Udep(\lambda, \tau) : (\lambda, \tau) \in \grid \ \wedge\ \Rsel(\lambda, \tau) + \gammar^{\mathrm{va}}(\lambda, \tau) \le \alpha \ \wedge\ \pacc(\lambda, \tau) \ge 2 \pmin \}$ (with $\sup_\emptyset := -\infty$; the supremum is over grid pairs, matching $\Mset$ and the grid-only reach of the inclusion lemma). Then \cref{thm:joint-cert} holds verbatim with $\Umargin$ replaced by $\Umarginva$. Because $\sigma_Z^2(\lambda, \tau) \le B^2 \pacc(\lambda, \tau)$, on the oracle domain $\pacc \ge 2 \pmin$ the leading term of $\gammar^{\mathrm{va}}$ never exceeds that of the worst-case $\gammar$ and is strictly smaller on low-variance pairs ($\sigma_Z^2 \ll B^2 \pacc$); consequently $\Umarginva$ can be finite (the oracle non-empty) on surfaces where the worst-case oracle $\Mset(\alpha - \gammar, \pmin)$ is empty. The oracle is defined through the population $\Rsel, \pacc, \sigma_Z^2$ and is not computable from a single calibration draw; \cref{sec:experiments} reports a calibration plug-in surrogate as an empirical diagnostic of non-emptiness, not a population-level certificate. The proof (supplementary material, Appendix~A) reruns the inclusion lemma with the per-pair empirical-Bernstein radius in place of the worst-case variance bound, swapping the Chernoff upper-tail event for one Maurer--Pontil variance-concentration event at the same per-event level, so the $\delta/4$ inclusion ledger is unchanged.
\end{corollary}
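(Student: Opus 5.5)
The plan is to reuse the proof of \cref{thm:joint-cert} unchanged except for the inclusion direction of \cref{lem:inclusion}. That is the only place the oracle set enters. Validity, the utility-closeness event $F$, and the three-inequality chain (\ref{eq:thm-T3})--(\ref{eq:thm-T2}) never refer to $\gammar$. It therefore suffices to build an event $E^{\mathrm{va}}$ of probability $\ge 1-\delta/4$ on which validity holds as before. On the same event we need a variance-adaptive inclusion: every grid pair with $\Rsel + \gammar^{\mathrm{va}} \le \alpha$ and $\pacc \ge 2\pmin$ lies in $\Ghat$. The theorem then follows on $E^{\mathrm{va}}\cap F$ by the same union bound, $\delta/4+\delta/2\le\delta$, and the same $\argmax$ chain with $\Umargin$ replaced by $\Umarginva$.

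\textbf{Event families.} I would keep (U1) and (U2) over all $m$ pairs and (U3) over $\Hset$, exactly as in \cref{lem:inclusion}; these give validity and step (f), so $\pLCB \ge \pacc/2 \ge \pmin$. I would replace the Chernoff upper-tail event (U4) by a variance-concentration event $E_4^{\mathrm{va}}(\lambda,\tau)$ on $\Hset$, namely $\widehat\sigma_Z \le \sigma_Z + \kappa B\sqrt{\log(64m/\delta)/(\ncert-1)}$ for an absolute $\kappa$. This is the Maurer--Pontil standard-deviation concentration (their Theorem~10) for variables in a range of length at most $2B$, since $Z_i \in [-\alpha, B-\alpha]$. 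Taken at level $\delta/(16m)$, it keeps the ledger $(2m+2|\Hset|)\delta/(16m)\le\delta/4$. Only the constant inside the logarithm changes, and it can be absorbed by taking $64m/\delta$, or enlarging $\kappa$, since $\log(16m/\delta)\le\log(64m/\delta)$.

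\textbf{Inclusion computation.} Fix an oracle pair. Then $\E[Z] = \pacc(\Rsel-\alpha) \le -\pacc\gammar^{\mathrm{va}}$. On $E_1$, $\EB \le \E[Z] + 2\etaZ$, so it suffices to show $2\etaZ \le \pacc\gammar^{\mathrm{va}}$. Substituting $E_4^{\mathrm{va}}$ into $\etaZ$ gives
\[
\etaZ \le \sqrt{\tfrac{2\sigma_Z^2\log(64m/\delta)}{\ncert}} + \kappa\sqrt2\, B\tfrac{\log(64m/\delta)}{\sqrt{\ncert(\ncert-1)}} + \tfrac{7B\log(64m/\delta)}{3(\ncert-1)}.
\]
Using $\sqrt{\ncert(\ncert-1)} \ge \ncert-1$, the last two terms combine into $c_0 B\log(64m/\delta)/(\ncert-1)$ with $c_0 := \kappa\sqrt2 + 7/3$. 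So $2\etaZ \le \pacc\gammar^{\mathrm{va}}$ by the definition of $\gammar^{\mathrm{va}}$, hence $\EB\le 0$. Step (f) is verbatim from \cref{lem:inclusion}.

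\textbf{Comparison claims.} The remaining claims are elementary. First, $\sigma_Z^2 \le \E[Z^2] \le B^2\pacc$, which gives the leading-term comparison: $(2/\pacc)\sqrt{2B^2\pacc\log/\ncert} = 2\sqrt2\,B\sqrt{\log/(\ncert\pacc)}$, which is at most $4B\sqrt{\log/(\ncert\pmin)}$ when $\pacc\ge 2\pmin$. Second, the possibility of a nonempty oracle follows from strict inequality when $\sigma_Z^2 \ll B^2\pacc$.

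\textbf{Main obstacle.} The main difficulty is the variance-concentration step. The Maurer--Pontil bound is on $\widehat\sigma_Z - \sigma_Z$, not on variances, and its range scaling must be tracked so that the deviation lands in the $1/(\ncert-1)$ second-order term and not in the leading $\sqrt{\cdot/\ncert}$ term. I would first try the standard-deviation form directly, as above, because squaring would create cross terms. Only if that fails would I fall back to a self-bounding argument on $\sum_i Z_i^2$.
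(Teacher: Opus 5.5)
Your proposal is correct and follows the same route as the paper's (deferred) Appendix~A proof. You keep (U1)--(U3), replace the Chernoff upper-tail event (U4) by a Maurer--Pontil standard-deviation concentration event at the same per-event level $\delta/(16m)$ so the $\delta/4$ ledger is unchanged, and push the deviation into the second-order $c_0 B\log(64m/\delta)/(\ncert-1)$ term; validity, the event $F$, and the $\argmax$ chain of \cref{thm:joint-cert} then go through verbatim. One small tightening: $Z_i \in [-\alpha, B-\alpha]$ has range exactly $B$, not $2B$, so $\kappa = \sqrt{2}$ already suffices via $\log(16m/\delta) \le \log(64m/\delta)$, giving $c_0 = 13/3$.
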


\subsection{Lower bound}

\begin{proposition}[Range-only Hoeffding-ratio limitation, sketch]\label{prop:lower-bound}
Fix $\delta \in (0, 1/2)$, $\pmin \in (0, 1]$, $\alpha \in [0, B]$, and suppose $\ncert \pmin \ge C \log(1/\delta)$ for a sufficiently large absolute constant $C$ (a regime condition included only to situate the witness within \cref{thm:joint-cert}'s operating range; the lower bound below uses solely the validity of $\underline{p}$, not this condition). Consider any range-only Hoeffding-ratio construction $\hat R := \alpha + (\Zbar + r_H) / \underline{p}$, where $r_H := \sqrt{B^2 \log(2/\delta)/(2\ncert)}$ is the range-$B$ Hoeffding radius on $\Zbar = \tfrac{1}{\ncert}\sum_i A_i(L_i - \alpha)$ and $\underline{p} \in (0,1]$ is any $(1-\delta)$-valid lower confidence bound on $\E[A]$ (i.e.\ $\Prob[\underline{p} \le \E[A]] \ge 1-\delta$). Then there exists a joint distribution on $(A, L)$ with $A \in \{0, 1\}$, $L \in [0, B]$, $\E[A] = \pmin$ \emph{exactly}, such that, with probability at least $1 - \delta$ under that distribution, the numerator-radius contribution $r_H / \underline{p}$ to $\hat R - \alpha$ is at least $\Omega(B \sqrt{\log(1/\delta)} / (\pmin \sqrt{\ncert}))$. This is a limitation of the stated range-only construction, not a minimax lower bound for all selective conformal risk control procedures.
\end{proposition}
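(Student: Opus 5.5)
The plan is to exhibit the simplest possible witness and then use only the validity of $\underline{p}$ to bound it from above. Take the two-point distribution $A \sim \mathrm{Bernoulli}(\pmin)$, with $L$ on $\{A=1\}$ set to any fixed value in $[0,B]$ (say $L \equiv \alpha$, so that $Z \equiv 0$ and the numerator estimate $\Zbar$ plays no role). Then $\E[A] = \pmin$ exactly, as required. The construction's radius $r_H = B\sqrt{\log(2/\delta)/(2\ncert)}$ is deterministic: it depends only on $(B,\ncert,\delta)$ and not on the data. Consequently the contribution $r_H/\underline{p}$ is small only if $\underline{p}$ is large, and the argument reduces to an upper bound on $\underline{p}$.

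The key step is the inequality $\underline{p} \le \E[A]$. By the assumed $(1-\delta)$-validity, the event $\{\underline{p} \le \E[A]\} = \{\underline{p} \le \pmin\}$ has probability at least $1-\delta$ under this distribution. On that event,
\[
\frac{r_H}{\underline{p}} \;\ge\; \frac{r_H}{\pmin} \;=\; \frac{B\sqrt{\log(2/\delta)}}{\sqrt{2}\,\pmin\sqrt{\ncert}}.
\]
Since $\delta < 1/2$, we have $\log(2/\delta) \ge \log(1/\delta)$, so the right-hand side is at least $B\sqrt{\log(1/\delta)}/(\sqrt2\,\pmin\sqrt{\ncert})$. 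This is $\Omega\bigl(B\sqrt{\log(1/\delta)}/(\pmin\sqrt{\ncert})\bigr)$ with explicit constant $1/\sqrt2$, which gives the claim. The regime condition $\ncert\pmin \ge C\log(1/\delta)$ is not used. I would note this explicitly, adding only that under it the witness has $\pmin \ge \pmin$ trivially and so lies in $\Hset$, which situates it within the operating range of \cref{thm:joint-cert}.

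To frame the result as a separation, I would contrast it with the variance-adaptive margin. On this same witness, $\sigma_Z^2 \le B^2\pmin$, so step (d) of \cref{lem:inclusion} gives a leading term of order $B\sqrt{\log(m/\delta)}/(\sqrt{\pmin}\sqrt{\ncert})$. This is a factor $1/\sqrt{\pmin}$ smaller than the Hoeffding-ratio term. I would emphasize that the statement concerns only the radius contribution $r_H/\underline{p}$ and not $\hat R - \alpha$ as a whole. Because $\Zbar$ can be negative, the total could be smaller, but that is not claimed. With the choice $L\equiv\alpha$ we have $\Zbar = 0$, so in fact the whole excess $\hat R-\alpha$ obeys the same bound on the witness. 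I would note this as a bonus.

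I expect no real obstacle, since the proof is essentially one line once the witness is fixed. The only delicate point is interpretive: the lower bound must rely solely on validity, not on any particular form of $\underline{p}$ such as Clopper--Pearson. This is why I choose a distribution with $\E[A]$ exactly $\pmin$, so that validity alone caps $\underline{p}$ at $\pmin$ with probability at least $1-\delta$. I would also make explicit that this is a construction-specific limitation, not a minimax statement, matching the proposition's own scoping.
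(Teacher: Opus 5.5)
Your argument is correct and is essentially the paper's. Both take a $\mathrm{Bernoulli}(\pmin)$ acceptance indicator so that $\E[A]=\pmin$ exactly, use validity of $\underline{p}$ alone to get $\underline{p}\le\pmin$ with probability at least $1-\delta$, and then use the deterministic $r_H$ to conclude $r_H/\underline{p}\ge r_H/\pmin$. The one difference is the witness: the paper puts $L$ on $\{0,B\}$ with mass at both endpoints so that $Z$ genuinely spans range $B$, whereas your choice $L\equiv\alpha$ makes $Z\equiv 0$. That is harmless for the stated claim, and it even bounds the full excess $\hat R-\alpha$. However, it makes your $1/\pmin$ versus $1/\sqrt{\pmin}$ separation remark degenerate, since the variance-adaptive leading term then vanishes rather than illustrating the $1/\sqrt{\pmin}$ rate.
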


The witness takes $A \sim \mathrm{Bernoulli}(\pmin)$ (so $\E[A] = \pmin$ exactly) and, conditional on $A = 1$, $L$ supported on $\{0, B\}$ with nonzero mass on both endpoints (so the nonzero support of $Z = A(L - \alpha)$ spans $B$); when $A = 0$, $L$ is arbitrary. The numerator radius $r_H = \Theta(B \sqrt{\log(1/\delta)/\ncert})$ is fixed by the construction. Because $\E[A] = \pmin$ exactly, validity of the LCB gives $\Prob[\underline{p} \le \pmin] = \Prob[\underline{p} \le \E[A]] \ge 1 - \delta$; on that event $r_H / \underline{p} \ge r_H / \pmin = \Omega(B \sqrt{\log(1/\delta)} / (\pmin \sqrt{\ncert}))$. (The lower bound needs an \emph{upper} bound on the denominator $\underline{p}$, supplied by the validity direction at the exact-$\pmin$ witness, not the concentration direction $\underline{p} \ge \pmin/2$.) The full argument, with the construction class and probability mode made precise, is in the supplementary material (Appendix~C). The variance-adaptive rate $\gammar = O(B \sqrt{\log(64m/\delta) / (\ncert \pmin)})$ in \cref{thm:joint-cert} (under $(\star)$) shares the $B$ and $\ncert^{-1/2}$ scaling with the range-only construction and improves the acceptance dependence from $1/\pmin$ to $1/\sqrt{\pmin}$ (up to the $\sqrt{\log(64m/\delta)/\log(1/\delta)}$ Bonferroni overhead). We read this as a structural variance-adaptive advantage at low acceptance against the stated range-only Hoeffding-ratio construction, not as a minimax-over-class optimality claim. \cref{prop:lower-bound} is a sketch over a single two-point Bernoulli distribution and a limitation result for the specific range-only Hoeffding-ratio construction, not a full minimax lower bound over a class of joint distributions and procedures; the latter, including a minimax lower bound for the adaptive selective conformal risk control class, is open.

\subsection{Regime separation between Ours and Hoeffding--CRC}

A natural question is when our variance-adaptive per-pair risk-bound expression is numerically tighter than the Hoeffding--CRC selective per-pair expression on the same pair. The two per-pair expressions, written at matched Bonferroni regimes, are
\begin{align}
\UCBours(\lambda, \tau) &= \RselHat + \etaZ(\lambda, \tau) / \paccHat, \\
\UCBHoeff(\lambda, \tau) &= \RselHat + B \sqrt{\log(m / \delta) / (2 s)},
\end{align}
where $s = \ncert \paccHat$ is the observed acceptance count and $\etaZ$ uses the empirical variance $\SigmaZ$ at the conservative level $\delta / (16 m)$. We compare these two \emph{nominal} per-pair half-widths. The numerator $\Zbar + \etaZ$ of $\UCBours$ is the certified-valid quantity tested by $\EB \le 0$ in \cref{lem:inclusion}; the division by the empirical $\paccHat$ places it on the $\Rsel$ scale for comparison with the Hoeffding form, and is the per-pair point-estimate denominator used in the selective-CRC literature for both constructions. A standalone $(1-\delta)$-valid per-pair upper bound on $\Rsel$ would instead divide by the acceptance LCB, giving $\alpha + (\Zbar + \etaZ)/\pLCB$; \cref{cor:regime} compares the nominal expressions, and the certificate's validity (\cref{thm:joint-cert}) is established through the $\EB \le 0$ test, not through $\UCBours$ being a standalone valid bound on $\Rsel$.

\begin{corollary}[Ours-vs-\HCRC{} regime separation, finite-sample and leading-order]\label{cor:regime}
Fix a pair with $s = \ncert \paccHat \ge 2$ and let $\kappa_n := \ncert / (\ncert - 1)$. Let $\LO := \log(64 m / \delta)$ and $\LH := \log(m / \delta)$. The exact algebraic identity (derived in the supplementary, Appendix~A)
\[
\SigmaZ = \frac{s-1}{\ncert - 1} \sigmaHat + \kappa_n \paccHat (1 - \paccHat) (\RselHat - \alpha)^2
\]
gives $\Tex := \SigmaZ / \paccHat = \kappa_n [(1 - 1/s) \sigmaHat + (1 - \paccHat)(\RselHat - \alpha)^2]$. Then the finite-sample per-pair comparison $\UCBours < \UCBHoeff$ is equivalent to $\Tex < \sigmaexact(s)$, where
\begin{equation}
\sigmaexact(s) := \frac{B^2}{2 \LO} \left[ \sqrt{\LH / 2} - \kappa_n \frac{7 \LO}{3 \sqrt s} \right]_+^2.
\label{eq:cor-V-exact}
\end{equation}
Replacing $\Tex$ by the leading-order surrogate
$\Tobs := \sigmaHat + (1 - \paccHat)(\RselHat - \alpha)^2$
and $\kappa_n$ by $1$ gives the leading-order predictor
\begin{equation}
\Tobs < \sigmastar(s) := \frac{B^2}{2 \LO} \left[ \sqrt{\LH / 2} - \frac{7 \LO}{3 \sqrt s} \right]_+^2,
\label{eq:cor-V}
\end{equation}
where $[x]_+ := \max(x, 0)$. Both the variable side and the threshold side incur Bessel corrections, with \emph{explicit absolute constants} (proved in the supplementary material, Appendix~A): for every $\ncert \ge 2$ and $s \ge 2$,
\begin{equation}
\begin{aligned}
|\Tex - \Tobs| &\le 3 B^2 (1/s + 1/\ncert), \\
|\sigmaexact(s) - \sigmastar(s)| &\le B^2 / \ncert,
\end{aligned}
\label{eq:cor-explicit-slack}
\end{equation}
both independent of $m, \delta, s, \ncert$. A sufficient (and now checkable) condition for the leading-order rule (\ref{eq:cor-V}) to agree with the finite-sample iff (\ref{eq:cor-V-exact}) on a given pair is therefore
\[
|\Tobs - \sigmastar(s)| > 4 B^2 (1/s + 1/\ncert);
\]
pairs within this explicit band near the threshold may be classified differently by the two rules. When $\paccHat$ is bounded below (so $1/s = O(1/\ncert)$), the combined slack reduces to the standard $O(B^2/\ncert)$ leading-order error. Condition (\ref{eq:cor-V}) is satisfiable ($\sigmastar(s) > 0$) only when $s > \sznot := [14 \LO / (3 \sqrt{2 \LH})]^2$.
\end{corollary}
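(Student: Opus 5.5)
The plan is to reduce everything to the two closed-form half-widths and do exact algebra; no probability is involved, since \cref{cor:regime} is a deterministic statement about the nominal expressions at a fixed pair. Write $\paccHat = s/\ncert$.

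\textbf{Variance identity.} I would first prove the identity for $\SigmaZ$. On accepted samples, $Z_i = L_i - \alpha$, and $Z_i = 0$ otherwise. Hence
\[
\textstyle\sum_i Z_i^2 = (s-1)\sigmaHat + s(\RselHat-\alpha)^2, \qquad \Zbar = \paccHat(\RselHat - \alpha).
\]
Substituting into $\SigmaZ = (\sum_i Z_i^2 - \ncert\Zbar^2)/(\ncert-1)$ and using $s - \ncert\paccHat^2 = \ncert\paccHat(1-\paccHat)$ gives the identity. Dividing by $\paccHat$ then gives the stated form of $\Tex$.

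\textbf{Exact equivalence.} The common term $\RselHat$ cancels, so $\UCBours < \UCBHoeff$ reads $\etaZ/\paccHat < B\sqrt{\LH/(2s)}$. I would multiply through by $\paccHat$ and use $\SigmaZ = \Tex\, s/\ncert$, so that $\sqrt{2\SigmaZ\LO/\ncert} = \sqrt{2\LO\Tex s}/\ncert$. After multiplying by $\ncert$ and dividing by $\sqrt s$, the inequality becomes
\[
\sqrt{2\LO\Tex} \;<\; B\Big[\sqrt{\LH/2} - \kappa_n \tfrac{7\LO}{3\sqrt s}\Big].
\]
There are two cases.
\begin{itemize}
\item If the bracket is $\le 0$, the inequality is impossible, because the left side is nonnegative. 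Correspondingly, $\sigmaexact(s) = 0$ and $\Tex < 0$ is also impossible.
\item If the bracket is positive, squaring is reversible and gives $\Tex < \sigmaexact(s)$.
\end{itemize}
The leading-order rule (\ref{eq:cor-V}) is then just a definition. The satisfiability claim follows by solving $\sqrt{\LH/2} > 7\LO/(3\sqrt s)$ for $s$, which gives exactly $s > \sznot$.

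\textbf{Slack bounds.} For the variable side, write
\[
\Tex - \Tobs = (\kappa_n - 1)\Tobs - \kappa_n\sigmaHat/s.
\]
I would then use four elementary facts: $\kappa_n - 1 = 1/(\ncert-1) \le 2/\ncert$, $\kappa_n \le 2$, $\sigmaHat \le B^2 s/(4(s-1)) \le B^2/2$ for $s \ge 2$, and $(\RselHat-\alpha)^2 \le B^2$. Together these give $|\Tex - \Tobs| \le 3B^2(1/s + 1/\ncert)$ with room to spare.

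For the threshold side, set $a := \sqrt{\LH/2} - 7\LO/(3\sqrt s)$ and $b := \sqrt{\LH/2} - \kappa_n\, 7\LO/(3\sqrt s)$.
\begin{itemize}
\item When both are positive, $a - b = (\kappa_n - 1)\,7\LO/(3\sqrt s)$ and $a + b \le 2\sqrt{\LH/2}$. So $|\sigmastar - \sigmaexact| \le B^2\,(7/(3(\ncert-1)))\sqrt{\LH/2}/\sqrt s$.
\item The remaining issue is that this bound still contains $\LH$. To remove it, I would use positivity of $b$, namely $\sqrt s > 7\kappa_n\LO/(3\sqrt{\LH/2})$. This gives $\sqrt{\LH/2}/\sqrt s < 3\LH/(14\LO) \le 3/14$, since $\LH < \LO$. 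The bound therefore collapses to $B^2/(2(\ncert-1)) \le B^2/\ncert$.
\item If only $a > 0 \ge b$, I would bound $a^2 \le (a-b)^2$ and reuse the same $\LH/\LO$ comparison; this time it must come from the slightly weaker condition $a > 0$.
\end{itemize}
I expect this $m,\delta$-uniformity step to be the main obstacle. It only works because the positive-part threshold forces $s$ to exceed a multiple of $\LO^2/\LH$, and I would have to be careful with the cross-case.

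\textbf{Agreement band.} By the triangle inequality, $|\Tobs - \sigmastar(s)| > 4B^2(1/s + 1/\ncert)$ implies that $\Tex - \sigmaexact(s)$ has the same sign as $\Tobs - \sigmastar(s)$. This uses $3B^2(1/s+1/\ncert) + B^2/\ncert \le 4B^2(1/s + 1/\ncert)$. Finally, if $\paccHat$ is bounded below, then $1/s = O(1/\ncert)$ and the slack is $O(B^2/\ncert)$.
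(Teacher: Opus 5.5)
Your proposal is correct and follows the same route as the paper's sketch. That route is: derive the identity from $\sum_i Z_i^2=(s-1)\sigmaHat+s(\RselHat-\alpha)^2$ and $\Zbar=\paccHat(\RselHat-\alpha)$, rescale both half-widths by $\sqrt s$, split $\Tex-\Tobs=(\kappa_n-1)\Tobs-\kappa_n\sigmaHat/s$, and close the agreement band with the triangle inequality. You also supply details the sketch defers to its appendix, namely the positive-part case analysis and the use of positivity of the bracket ($s>\sznot$) to remove $\LH$ so that the threshold gap is $\le B^2/\ncert$ uniformly in $m,\delta$; and you correctly treat the rewriting of $\etaZ$ as an equality, which the ``iff'' needs where the sketch writes ``$\le$''.
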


\begin{proof}[Proof sketch]
From the exact identity, $\etaZ \le \sqrt{2 \paccHat \Tex \LO / \ncert} + 7 B \LO / (3 (\ncert - 1))$. Dividing by $\paccHat = s / \ncert$ and multiplying by $\sqrt s$ gives $\sqrt s \cdot \etaZ / \paccHat \le \sqrt{2 \Tex \LO} + \kappa_n \cdot 7 B \LO / (3 \sqrt s)$. The Hoeffding side is exact: $\sqrt s \cdot (\UCBHoeff - \RselHat) = B \sqrt{\LH / 2}$. Rearranging $\sqrt{2 \Tex \LO} + \kappa_n \cdot 7 B \LO / (3 \sqrt s) < B \sqrt{\LH / 2}$ for $\Tex$ gives (\ref{eq:cor-V-exact}). Replacing $\Tex$ by $\Tobs$ and $\kappa_n$ by $1$ yields (\ref{eq:cor-V}); the bounded-loss bound $|\Tex - \Tobs| \le \kappa_n \sigmaHat / s + (\kappa_n - 1) [\sigmaHat + (1 - \paccHat)(\RselHat - \alpha)^2] \le 3 B^2 (1/s + 1/\ncert)$ for $\ncert \ge 2$, together with the threshold-side gap $|\sigmaexact(s) - \sigmastar(s)| \le B^2/\ncert$ (both with explicit absolute constants; full derivation including the positive-part Lipschitz step in the supplementary material, Appendix~A), controls the error in (\ref{eq:cor-explicit-slack}). The sample-size readout follows from $\sigmastar(s) > 0$ requiring $7 \LO / (3 \sqrt s) < \sqrt{\LH / 2}$.
\end{proof}

\paragraph{Numerical instantiation.} At $m = 15$, $\delta = 0.10$, $B = 1$ (the segmentation operating point of \cref{sec:experiments}), the threshold sample size is $\sznot \approx 183$. \Cref{tab:cor-numeric} instantiates five representative pairs covering COCO val 2017, ADE20K, and an ImageNet low-acceptance regime: the closed-form prediction $\Tobs < \sigmastar(s)$ matches the observed per-pair outcome in every case. The five pairs are a representative instantiation; the systematic per-(grid, seed) audit across all grid pairs and seeds ($3{,}750$ cells on COCO, spanning three loss families and two calibration scales, and ImageNet) is reported in the supplementary material (Appendix~B), where the closed-form rule matches the realised per-pair winner on every cell, with disagreements provably confined to (and empirically absent even within) the near-threshold band of (\ref{eq:cor-explicit-slack}).

\begin{table*}[t]
\centering
\caption{Numerical instantiation of \cref{cor:regime} (V) on five representative pairs, $B = 1$ (the four segmentation rows use $m = 15$, $\delta = 0.10$; the ImageNet RN50 V2 row uses the ImageNet operating point $m = 35$, $\delta = 0.05$). The closed-form prediction $\Tobs < \sigmastar(s)$ matches the observed per-pair outcome in every row.}
\label{tab:cor-numeric}
\footnotesize
\setlength{\tabcolsep}{4pt}
\begin{tabular}{lrrrrcc}
\toprule
Setup & $s$ & $\sigmastar(s)$ & $\sigmaHat$ & $\Tobs$ & Prediction & Outcome \\
\midrule
COCO pixacc, Ours pair, $\hat p {=} 0.22$, $\ncert {=} 4000$       & 880  & $0.041$ & $0.006$ & $0.007$ & Ours      & $+22.1$ pp \checkmark \\
COCO pixacc, $q {=} 0.85$, $\hat p {=} 0.15$, $\ncert {=} 4000$    & 600  & $0.027$ & $0.005$ & $0.007$ & Ours      & Ours wins \checkmark \\
ADE20K binary, $q {=} 0.50$, $\hat p {=} 0.50$, $\ncert {=} 1500$ & 750  & $0.035$ & $0.087$ & $0.092$ & \HCRC{}   & $-4.3$ pp \checkmark \\
ADE20K binary, $q {=} 0.85$, $\hat p {=} 0.15$, $\ncert {=} 1500$ & 225  & $0.001$ & $0.039$ & $0.061$ & \HCRC{}   & \HCRC{} wins \checkmark \\
ImageNet RN50 V2 low-acc, $\hat p {=} 0.01$, $\ncert {=} 33000$    & 330  & $0.009$ & $0.005$ & $0.005$ & Ours      & $8.4{\times}$ \checkmark \\
\bottomrule
\end{tabular}
\end{table*}

\paragraph{Scope.} \Cref{cor:regime} compares our per-pair upper bound on $\Rsel$ against the Hoeffding--CRC selective per-pair upper bound on the same pair only. It does not address per-pair Bernstein-on-accepted-samples comparisons (a tighter per-pair primitive that does not certify the joint object; \cref{sec:discussion}), per-pair WSR comparisons (regime-dependent), or the certified-acceptance gap (\cref{sec:experiments}), which depends on the certifier and not on a single per-pair upper bound. The corollary's main use is converting the \cref{sec:experiments} regime characterisation from post-hoc empirical contrast into theory-derived regime prediction within the Ours-vs-\HCRC{} comparison scope.

\paragraph{Width scaling.} The three closed-form scaling laws established above (the acceptance-floor rate of \cref{thm:joint-cert} against the range-only construction of \cref{prop:lower-bound}, the per-pair sample-size decay, and the variance adaptivity of \cref{cor:regime}) are visualised together in \cref{fig:width-scaling}. All curves are the proof expressions evaluated at the paper's operating points; no calibration data enters.

\begin{figure*}[t]
\centering
\includegraphics[width=\textwidth]{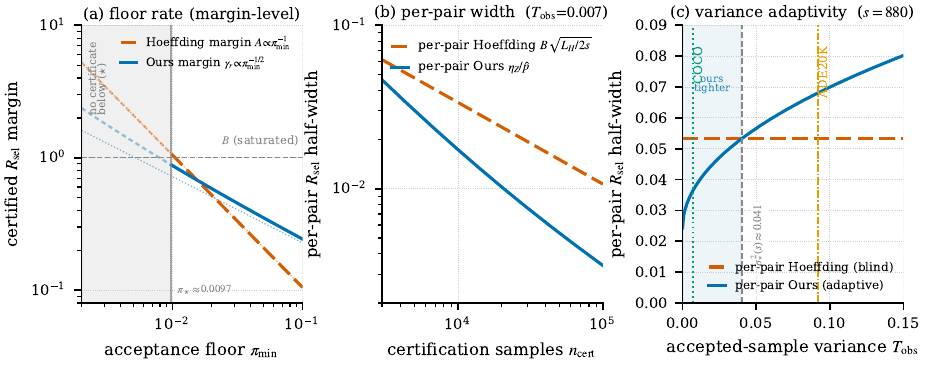}
\caption{Closed-form width scaling of the joint certificate (proof expressions only; no calibration data). Panel (a) is at the \emph{margin} level; panels (b) and (c) are \emph{per-pair} half-widths and are not on the same target or vertical scale as (a). The two ``Hoeffding'' baselines also differ: panel (a) uses the textbook $\pmin$-saturated selective margin $A(\pmin)$, panels (b) and (c) the per-pair upper-bound half-width $B\sqrt{\log(m/\delta)/(2s)}$.
\textbf{(a) Floor rate (margin-level).} In leading order the certified $\Rsel$-margin $\gammar$ (\cref{thm:joint-cert}) scales as $\pmin^{-1/2}$, whereas the textbook $\pmin$-saturated Hoeffding--CRC selective margin $A(\pmin) = (B / \pmin) \sqrt{\log(2 m / \delta) / (2 \ncert)}$ scales as $\pmin^{-1}$ (dotted slope guides). At the ImageNet operating point ($\ncert {=} 33$k, $m {=} 35$, $\delta {=} 0.05$) the Hoeffding margin saturates at the loss range $B$ and certifies nothing across the low-acceptance regime in which $\gammar$ remains below $B$; the two curves cross near $\pmin {\approx} 0.015$, so the advantage is regime-scoped to low acceptance rather than uniform. The faded/dashed segments left of $\pi_\star {\approx} 0.0097$ mark $\pmin$ below the sample-size condition $(\star)$, where no certificate is asserted. This is the leading-order $1 / \pmin \to 1 / \sqrt{\pmin}$ improvement of \cref{thm:joint-cert} over the range-only Hoeffding-ratio construction of \cref{prop:lower-bound}.
\textbf{(b) Per-pair width vs.\ sample size.} At a fixed small accepted-sample variance $\Tobs {=} 0.007$ (the COCO certifier-selected pair; $m {=} 15$, $\delta {=} 0.10$, $\paccHat {=} 0.22$, $s {=} \paccHat \ncert$), the variance-adaptive per-pair half-width $\etaZ / \paccHat$ is tighter than the range-only per-pair Hoeffding half-width $B \sqrt{\log(m / \delta) / (2 s)}$ throughout the plotted range: about $1.3 {\times}$ at $\ncert {=} 3$k rising to $3.1 {\times}$ at $\ncert {=} 100$k, approaching the $\approx 4.4 {\times}$ asymptotic advantage; the Hoeffding half-width is exactly $\propto \ncert^{-1/2}$ while Ours decays at least as fast (visibly steeper at small $\ncert$ owing to its lower-order term).
\textbf{(c) Per-pair variance adaptivity.} Sweeping the accepted-sample variance $\Tobs$ at the segmentation operating point ($s {=} 880$), $\etaZ / \paccHat$ grows as $\sqrt{\Tobs}$ while the per-pair Hoeffding half-width is variance-blind (flat); the two cross at the regime threshold $\sigmastar(880) {\approx} 0.041$ of \cref{cor:regime}~(\ref{eq:cor-V}), which coincides visually with the crossing. The observed COCO variance ($\Tobs {=} 0.007$) lies in the Ours-tighter region; ADE20K ($\Tobs {=} 0.092$) is shown against the same $s {=} 880$ threshold for scale and falls on the Hoeffding side: under its own smaller $s$ its threshold is even lower ($\sigmastar {=} 0.035$), so the verdict is unchanged, matching \cref{tab:cor-numeric}.}
\label{fig:width-scaling}
\end{figure*}

\section{Experimental Evaluation}
\label{sec:experiments}

\paragraph{Setup.} We evaluate on six surfaces. ImageNet val ($50{,}000$ images, ResNet-50/101/152 V2 logits, top-1 of $0.8084$/$0.8191$/$0.8228$) drives the classification headline; CIFAR-100 ($10{,}000$ test, ResNet-56) and ImageNet-V2 matched-frequency \cite{recht2019_imagenetv2} test distribution-shift behaviour; a synthetic Beta(2, 5) loss is a controlled sanity check. COCO val 2017 ($5000$ images, Mask2Former-Swin-B \cite{cheng2022_mask2former} pretrained on COCO panoptic, 133 classes) and ADE20K ($2000$ images, Mask2Former-Swin-B and SegFormer-MiT-B2 \cite{xie2021_segformer} pretrained on ADE20K) drive the dense-pattern-analysis evaluation. The certificate-backed headline results are on the $(\star)$-compliant ImageNet and COCO surfaces; ADE20K and ImageNet-V2 do \emph{not} meet $(\star)$ at their configured parameters and are reported only as out-of-scope stress tests (\cref{sec:oos-stress}), read as heuristic and regime-contrast behaviour rather than certificate-validity evidence. Default operating point is $\alpha = 0.05$, $\delta = 0.05$, $m = 35$ for ImageNet and $\alpha = 0.10$, $\delta = 0.10$, $m = 15$ for segmentation. We use $30$ seeds for the headline ImageNet width comparison, the F.1 stress sweep, and the headline COCO softmax run; $20$ for cross-model and ADE20K evaluations (the COCO entropy variant also uses $20$); and $10$ for the ImageNet-V2 distribution-shift block. All experiments are deterministic given seed. The implementation, the precomputed result files backing every table and figure, the cached COCO per-image arrays, and a self-contained utility-leg verification script are included with this submission (\texttt{code/}; see \texttt{code/README.md}), and mirrored at \url{https://github.com/yahiko-l/joint-selective-crc}.

\subsection{Joint-certificate validity}
\label{sec:validity}

\begin{table*}[t]
    \centering
    \caption{Joint-certificate held-out validity across four evaluation surfaces.
    ``\#feas'' is the number of feasible runs (Algorithm~1 did not return INFEASIBLE);
    ``\#vio'' counts joint violations ($R_{\mathrm{test}} > \alpha$ or $p_{\mathrm{test}} < \pi_{\min}$);
    ``CP 95\% UB'' is the Clopper--Pearson 95\% one-sided upper bound on the violation rate
    (avoids the invalid $[0/n,0/n]$ artefact at zero observed events).
    $^\dagger$B11 is reported descriptively: the sample-size condition (\ref{eq:sample-size}) is not met at the configured $(\pi_{\min}, m)$;
    we make no theorem-backed shift-robustness claim.}
    \label{tab:validity}
    \small
    \begin{tabular}{lrrl}
        \toprule
        Surface & \#feas & \#vio & CP 95\% UB \\
        \midrule
        Synthetic Beta$(2,5)$, $27$ configs $\times$ $20$ seeds & 237 / 540 & 0 & 0.0126 \\
        Synthetic Beta$(2,5)$ stress sweep, $16$ configs $\times$ $30$ seeds & 150 & 0 & 0.0198 \\
        ImageNet RN50 V2, symmetric label-flip $\{0,5,10,20\}\%$ & 40 (4 rates) & 0 & 0.0722 \\
        ImageNet$\to$ImageNet-V2 shift, $\Delta$top-1 $=10.9$pp & 10 & 0 & 0.2589 $^\dagger$ \\
        \bottomrule
    \end{tabular}
\end{table*}

\Cref{tab:validity} reports the empirical violation rate on four surfaces. On the F.1 synthetic Beta(2, 5) stress sweep ($30$ seeds $\times$ $16$ configurations producing $150$ feasible runs in total: of the $16$ configurations, $4$ at $\ncert = 25{,}000$ are fully feasible ($30/30$ each), $4$ at $\ncert = 5000$ are partially feasible ($7$--$8/30$ each), and $8$ are infeasible at the more aggressive settings), the observed violation rate is $0$ with a Clopper--Pearson $95\%$ one-sided upper bound of $0.0198$, well below $\delta = 0.10$. On the synthetic Beta(2, 5) calibration sanity ($237$ of $540$ feasible), the violation rate is $0$. The label-noise robustness block (B12, $40$ runs across noise rates $\{0, 5\%, 10\%, 20\%\}$) records zero violations at every noise level. The ImageNet-V2 distribution-shift block (B11, $10$ runs) records zero violations descriptively: this row does not meet $(\star)$ at the configured $\pmin = 0.01$, $m = 35$ (the configured $\ncert = 25{,}000$ is below the required $\approx 32{,}000$), so the descriptive label is the honest one. Empirically observed zero violations are supporting evidence, not a guarantee; \cref{thm:joint-cert} is the actual guarantee. \Cref{fig:validity-dist} shows the per-seed realised test-risk distribution behind these rates across six surfaces (including the COCO and ADE20K segmentation surfaces of \cref{sec:segmentation}): every empirical $\Rsel^{\mathrm{test}}$ is at or below $\alpha$ except a single ADE20K--SegFormer test exceedance ($\Rsel^{\mathrm{test}} > \alpha$; $1/20$, $+0.25$ pp). Such an exceedance does not by itself contradict \cref{thm:joint-cert}, which controls the population selected risk over calibration draws; the guarantee is the theorem, not the empirical rate (per-row counts and Clopper--Pearson bounds in \cref{tab:validity}).

\begin{figure}[H]
\centering
\includegraphics[width=0.65\textwidth]{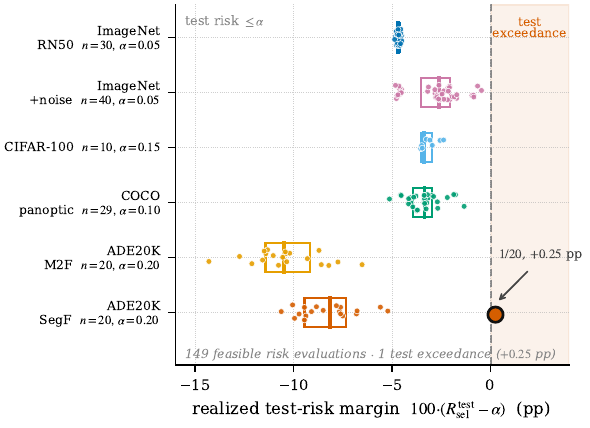}
\caption{Realised test-risk margins across six surfaces, an empirical risk-side sanity check complementing \cref{tab:validity}. Each dot is one calibration/test split (COCO: $29$ of $30$ attempted seeds feasible); boxes mark median and IQR, and per-row $n$ is shown in the axis labels. The plotted quantity is the \emph{empirical} test risk $\Rsel^{\mathrm{test}}$, an estimate of the population selected risk $\Rsel$ that \cref{thm:joint-cert} controls, re-centred as $100 \cdot (\Rsel^{\mathrm{test}} - \alpha)$ so that surfaces with different targets share one threshold. Every $\Rsel^{\mathrm{test}}$ is at or below $\alpha$ except one (ADE20K--SegFormer, $1/20$, $+0.25$ pp), an empirical \emph{test exceedance} (\cref{tab:validity} tabulates these as conservative held-out violation rates). Such an exceedance does not by itself contradict \cref{thm:joint-cert}: it is compatible with finite-test noise or with an allowed $\delta$-level calibration draw, and the guarantee is the theorem, not the empirical rate. Read each row against the zero line, \emph{not} across surfaces as a quality ranking, since the absolute margin reflects $\alpha$ and task difficulty rather than certificate tightness. Per-surface seed counts ($10$--$40$) make the per-row exceedance rates individually underpowered; the rigorous high-seed Clopper--Pearson bounds (e.g.\ $0/150$, UB $0.0198 \le \delta$) are in \cref{tab:validity}. Acceptance-side per-seed diagnostics are available in the released code.}
\label{fig:validity-dist}
\end{figure}

\subsection{Certified-decision payoff}
\label{sec:cert-decision}

\paragraph{Comparator scope.} Before the headline numbers we make explicit what each comparator certifies, because a like-for-like comparison is only meaningful within the comparator's stated object. \Cref{tab:comparator-scope} summarises the certified object of each baseline: only our certificate handles the joint $(\Rsel, \pacc, \Udep)$ object under adaptive grid selection. Per-pair Bernstein on accepted samples (Baseline B) and WSR are tighter primitives on the narrower per-pair risk object but do not deliver an acceptance lower bound, a utility lower bound, or post-selection validity over the grid. Headline comparisons in this subsection are therefore best read as ``against baselines that certify only $\Rsel$ per pair,'' not as universal dominance over methods that solve a different problem.

\begin{table*}[t]
\centering
\caption{Certified-object coverage of the comparators evaluated in this paper. \PASS{} = certificate produced; \FAIL{} = not certified by the method as published. Empirical width-ratio and certified-decision numbers for the per-pair $\Rsel$ comparators are reported in \cref{tab:cert-decision} (certified-decision payoff), \cref{fig:variance-adaptive} (width-ratio diagnostic), and \cref{tab:b14} (significance tests), and in \cref{sec:discussion} for Baseline B, WSR, and the simplified SCRC-T / SCoRE ports.}
\label{tab:comparator-scope}
\footnotesize
\setlength{\tabcolsep}{4pt}
\begin{tabular}{lccccc}
\toprule
Method & $\Rsel$ & $\pacc$ & $\Udep$ & \makecell{Adaptive\\$(\lambda, \tau)$ valid} & \makecell{Non-monotone\\loss} \\
\midrule
\textbf{Ours (\SCoRC)}                  & \PASS & \PASS & \PASS & \PASS         & \PASS \\
$A(\pmin)$ textbook Hoeffding--CRC      & \PASS & \FAIL & \FAIL & \FAIL         & \PASS \\
$A(\pLCB)$ sophisticated Hoeffding     & \PASS & \FAIL & \FAIL & \FAIL         & \PASS \\
Baseline B (per-pair Bernstein)         & \PASS & \FAIL & \FAIL & \FAIL         & \PASS \\
WSR betting confidence sequence         & \PASS & \FAIL & \FAIL & regime-dep.   & \PASS \\
\bottomrule
\end{tabular}
\end{table*}

\begin{table*}[t]
    \centering
    \caption{Certified-decision payoff on three ImageNet backbones at
    $\alpha=$0.05, $\pi_{\min}=$0.01, $\delta=$0.05,
    $n_{\mathrm{cert}}=$33000, 20 seeds.
    $p_{\mathrm{acc}}^{\mathrm{cert}}$: maximum acceptance the method can certify on $D_{\mathrm{cert}}$
    (median across seeds).
    $\Delta$ vs $A(\pi_{\min})$: textbook $\pi_{\min}$-saturated Hoeffding--CRC selective bound certifies nothing
    in this regime; column 5 reports the operational acceptance window Ours opens.
    $\Delta$ vs $A(p_{\mathrm{LCB}})$: the empirical Clopper--Pearson denominator variant
    coincides with Ours at the saturated maximum but is dominated by Ours at every low-acceptance operating point
    (last column counts grid pairs each method certifies; Fig.~\ref{fig:variance-adaptive} shows the per-pair width gap).}
    \label{tab:cert-decision}
    \small
    \begin{tabular}{lccccccc}
        \toprule
        \multirow{2}{*}{Model} & \multirow{2}{*}{top-1} &
        Ours & $A(\pi_{\min})$ & $\Delta$ vs & $A(p_{\mathrm{LCB}})$ & $\Delta$ vs & \#cert pairs \\
        & & $p_{\mathrm{acc}}^{\mathrm{cert}}$ & $p_{\mathrm{acc}}^{\mathrm{cert}}$ & $A(\pi_{\min})$ [pp] &
        $p_{\mathrm{acc}}^{\mathrm{cert}}$ & $A(p_{\mathrm{LCB}})$ [pp] & Ours vs $A(p_{\mathrm{LCB}})$ \\
        \midrule
        ResNet-50 V2 & 0.8084 & 0.275 & 0.000 & \textbf{$+27.5$} & 0.275 & $+0.00$ & 21 vs 7 ($3.0\times$) \\
        ResNet-101 V2 & 0.8191 & 0.805 & 0.000 & \textbf{$+80.5$} & 0.805 & $+0.00$ & 33 vs 27 ($1.2\times$) \\
        ResNet-152 V2 & 0.8228 & 0.829 & 0.000 & \textbf{$+82.9$} & 0.829 & $+0.00$ & 33 vs 27 ($1.2\times$) \\
        \bottomrule
    \end{tabular}
\end{table*}

\Cref{tab:cert-decision} reports the certified acceptance level $p^{\mathrm{cert}}_{\mathrm{acc}} := \max\{\paccHat(\lambda, \tau) : \EB(\lambda, \tau) \le 0 \wedge \pLCB(\lambda, \tau) \ge \pmin\}$ for our certificate against two Hoeffding--CRC selective baselines on three ImageNet backbones. Against the textbook $\pmin$-saturated Hoeffding--CRC selective bound ($\mathrm{margin} = (B/\pmin) \sqrt{\log(2 m / \delta) / (2 \ncert)}$, denoted $A(\pmin)$), our certificate opens a $+27.5$, $+80.5$, and $+82.9$ pp certified-acceptance window on ResNet-50, ResNet-101, and ResNet-152 respectively at $\alpha = 0.05$, $\pmin = 0.01$, $\delta = 0.05$, $\ncert = 33{,}000$. The textbook bound saturates at the loss range and certifies nothing. Against a sophisticated Hoeffding variant that replaces the $\pmin$ denominator with the empirical Clopper--Pearson lower bound (denoted $A(\pLCB)$), the maximum-acceptance endpoint coincides: both bounds become trivially tight on the saturated ``answer everything with a wide prediction set'' pair. The advantage of our certificate appears at lower operating points: on ResNet-50, our certificate covers $21$ grid pairs versus $7$ for $A(\pLCB)$, a $3.0{\times}$ increase in selective regimes. \Cref{fig:cert-frontier} plots the full per-pair frontier: across all three backbones our certificate attains a strictly lower selective-risk bound at every shared acceptance tier (e.g.\ $\Rsel \le 0.005$ vs $0.041$ on ResNet-50 at the maximum-acceptance tier, an ${\approx}8\times$ tighter certificate even where \cref{tab:cert-decision} records $\Delta = +0.00$) and uniquely certifies the lower-acceptance tiers that $A(\pLCB)$ cannot ($33$ vs $27$ certified pairs on ResNet-101/152).

\begin{figure*}[t]
\centering
\includegraphics[width=\textwidth]{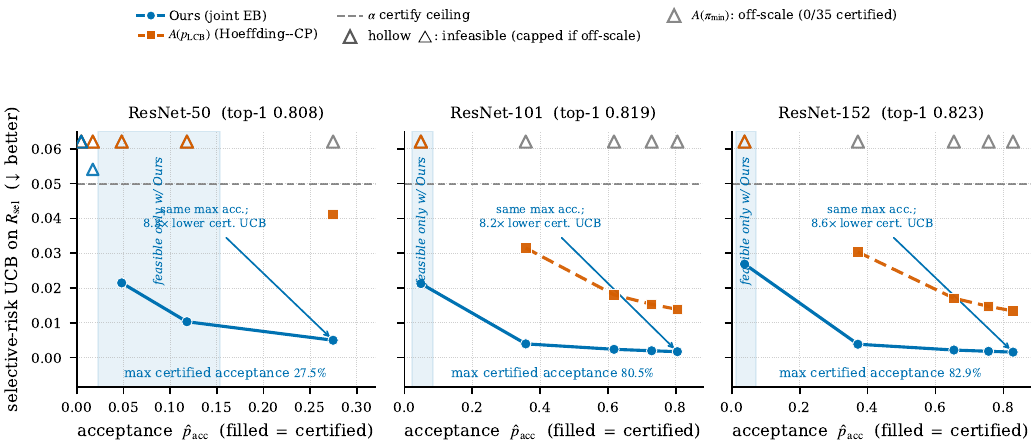}
\caption{\textbf{Certified risk--acceptance operating frontier on three ImageNet
backbones} ($\alpha = 0.05$, $\pmin = 0.01$, $\delta = 0.05$, $\ncert = 33{,}000$;
markers are medians over $20$ calibration splits). For each $\tau$-induced
acceptance tier we plot the best (lowest-UCB) $\lambda$ among the same $35$ grid
pairs; the envelope is built identically for every method and the connecting lines
are visual guides only (the tiers are discrete). The $y$-value is the per-pair
selective-risk bound $\alpha + \EB(\lambda, \tau) / \paccHat(\lambda, \tau)$,
display-normalised by the shared empirical acceptance $\paccHat$; the certification
decision itself uses $\EB \le 0$ together with $\pLCB \ge \pmin$, exactly as in
\cref{tab:cert-decision}, and since both methods share $\paccHat$ at each tier the
ordering and ratios below are exact. This per-tier best-$\lambda$ selection over all
$35$ grid pairs is covered by the joint certificate (\cref{thm:joint-cert}), not a
post-hoc pick. Filled markers ($\bullet$ Ours,
$\blacksquare$~$A(\pLCB)$) are certified; hollow triangles are infeasible (drawn at
their UCB, or capped at the top edge when off-scale; colour encodes method). The
deployment-utility floor $\Udep$ is the selection objective over the certified set
$\Ghat$ (reported in \cref{tab:cert-decision}), not a constraint shown here. The
textbook $A(\pmin)$ bound has $\mathrm{UCB} \approx 1.0$ and certifies $0/35$ on
every panel (grey carets). \textbf{\Cref{tab:cert-decision} reports only the scalar
maximum acceptance, where Ours and the strengthened comparator $A(\pLCB)$ tie
($\Delta = +0.00$); this figure shows the scalar hides the separation}: Ours has a
lower selective-risk certificate UCB at \emph{every shared tier}, ${\approx}8\times$
lower even at the maximum-acceptance tier (the variance-adaptive
Bernstein-vs-Hoeffding effect, cf.\ \cref{fig:variance-adaptive}), and uniquely
certifies the lower-acceptance tiers (shaded; $\paccHat \approx 0.04$--$0.12$) where
$A(\pLCB)$ is infeasible. \SCoRC's maximum certified acceptance is $27.5\%$, $80.5\%$,
and $82.9\%$ on ResNet-50/101/152 (per-panel labels); $A(\pLCB)$ ties Ours at that
maximum, while the textbook $A(\pmin)$ certifies $0/35$. The advantage is largest on the weakest backbone and
shrinks as the model strengthens ($21$ vs $7$ certified pairs on ResNet-50,
$33$ vs $27$ on ResNet-101/152; $3.0\times \to 1.2\times$). On \emph{every} split
$A(\pmin)$ certifies $0/35$, \SCoRC's certified count ${\ge}\,A(\pLCB)$'s, and \SCoRC's
certificate UCB ${\le}\,A(\pLCB)$'s at every shared pair. Comparators here are the two
Hoeffding--CRC selective-risk normalisations $A(\pmin)$ (textbook floor) and $A(\pLCB)$
(matched-valid, data-driven); they certify only the per-pair selected-risk object
$\Rsel$ and are displayed on the same risk--acceptance axes, but neither certifies the
acceptance lower bound, utility lower bound, or post-selection grid validity that
define our joint certificate (\cref{tab:comparator-scope}). The per-pair Bernstein and
WSR baselines certify a different, per-pair object and are analysed in
\cref{sec:discussion}.}
\label{fig:cert-frontier}
\end{figure*}

\begin{table}[t]
    \centering
    \caption{Statistical-strength tests on the ImageNet RN50 V2 PC2a width comparison
    ($\pi_{\min}=0.01$, $\alpha=\delta=0.05$). PRIMARY rows use seed-clustered per-seed
    medians over $N=30$ random calibration/test splits, which is the paper-quotable
    formal test. Diagnostic rows aggregate over 35 grid pairs per seed and are pseudo-replicated;
    they are reported as consistency checks only. Bonferroni $K=3$.}
    \label{tab:b14}
    \footnotesize
    \begin{tabular}{lcrr}
        \toprule
        Test & $N$ & $p$ (raw) & $p$ (Bonferroni) \\
        \midrule
        \textbf{PRIMARY} seed-clustered Wilcoxon, Ours vs $A$, low-acc, 1-sided & $N{=}30$ seeds & $9.3{\times}10^{-10}$ & $2.8{\times}10^{-9}$ \\
        \textbf{PRIMARY} seed-clustered Wilcoxon, Ours vs $A$, all-pairs, 1-sided & $N{=}30$ seeds & $9.3{\times}10^{-10}$ & $2.8{\times}10^{-9}$ \\
        \textbf{PC2b PRIMARY} seed-clustered Wilcoxon, Ours vs $B$, 2-sided & $N{=}30$ seeds & $1.9{\times}10^{-9}$ & $5.6{\times}10^{-9}$ \\
        diagnostic pair-level Wilcoxon, Ours vs $A$, low-acc, 1-sided & $N{=}420$ & $7.4{\times}10^{-71}$ & $2.2{\times}10^{-70}$ \\
        diagnostic pair-level Wilcoxon, Ours vs $A$, all-pairs, 1-sided & $N{=}1050$ & $1.2{\times}10^{-173}$ & $3.5{\times}10^{-173}$ \\
        diagnostic pair-level Wilcoxon, Ours vs $B$, all-pairs, 2-sided & $N{=}1050$ & $2.3{\times}10^{-173}$ & $7.0{\times}10^{-173}$ \\
        stratified permutation (plus-one), median(Ours$-A$), low-acc & 1000 perms & $1.0{\times}10^{-3}$ & 0.003 \\
        \bottomrule
    \end{tabular}
\end{table}

\paragraph{Width-ratio diagnostic.} The certified-acceptance gap is driven by tighter per-pair upper bounds on $\Rsel$. The all-pair median width ratio Ours/$A(\pmin)$ (over $20$ seeds across three backbones, visualised in \cref{fig:variance-adaptive}) is $0.021$, $0.004$, and $0.003$ on ResNet-50/101/152 V2, with per-pair win rate $100\%$. At this deployment floor the textbook range-only bound is itself \emph{vacuous} ($A(\pmin) \approx 1.05 \ge B = 1$ certifies nothing), so the operative claim is that the variance-adaptive certificate stays finite where $A(\pmin)$ collapses; against the \emph{non-vacuous} matched-valid normalisation $A(\pLCB)$, Ours remains ${\approx}\,10\times$ tighter (all-pair median ratio ${\approx}\,0.10$). On the low-acceptance subset ($\paccHat \le 2 \pmin$), only ResNet-50 V2 has qualifying pairs (median ratio $0.119$); ResNet-101 and ResNet-152 V2 have no low-acceptance pairs at this operating point. \Cref{tab:b14} reports the seed-clustered Wilcoxon significance tests on the ResNet-50 V2 comparison. The variance-adaptive certificate is tighter on \emph{every} one of the $N = 30$ random calibration/test splits, so the signed-rank statistic saturates ($W = 0$) and its $p$-value is set by the replicate count alone: raw $p = 1/2^{N} = 9.31 \times 10^{-10}$, post-Bonferroni $p = 2.79 \times 10^{-9}$ (the primary inference-valid statistic), rejecting the null at $\alpha = 0.01$. Raising the replicate count from the earlier $N = 10$ to $N = 30$ lowers this exact-test resolution floor from $2.93 \times 10^{-3}$ to $2.79 \times 10^{-9}$; the $N = 10$ run was already saturated, so the gain is in resolution, not in the consistency of the effect. The pair-level diagnostic Wilcoxon on the low-acceptance subset ($420$ observations $= 30$ seeds $\times$ $14$ low-acceptance pairs) gives $p = 2.21 \times 10^{-70}$ post-Bonferroni; the all-pair version ($1050$ observations $= 30$ seeds $\times$ $35$ pairs) gives $p = 3.50 \times 10^{-173}$ post-Bonferroni. Both diagnostic tests are reported as consistency checks; pair-level observations are not independent across the calibration grid, so the seed-clustered version is the primary headline. The $30$ seeds are random calibration/test re-splits of a single validation set rather than independent datasets, so the seed-clustered test certifies split-conditional dominance, corroborated by the effect being consistent in sign and magnitude across all three backbones (\cref{fig:variance-adaptive}). Family-wise error is controlled by Bonferroni \emph{within} each experiment; we do not pool $p$-values across experiments, and the headline claims rest on a small, pre-specified set of seed-clustered tests rather than on the pair-level grids.

\paragraph{Variance-adaptive payoff figure.} \Cref{fig:variance-adaptive} visualises the width-ratio diagnostic across the three backbones at $\pmin = 0.01$ and the ResNet-50 $\pmin$-sweep showing the ratio increasing with $\pmin$. Ours dominates both the floor-based $A(\pmin)$ and the data-driven matched-valid $A(\pLCB)$ normalisations across the plotted floors (both ratios $< 1$); on the low-acceptance subset the data-driven $A(\pLCB)$ is in fact the looser of the two, as its acceptance confidence bound falls near $\pmin$.

\begin{figure*}[t]
\centering
\includegraphics[width=\textwidth]{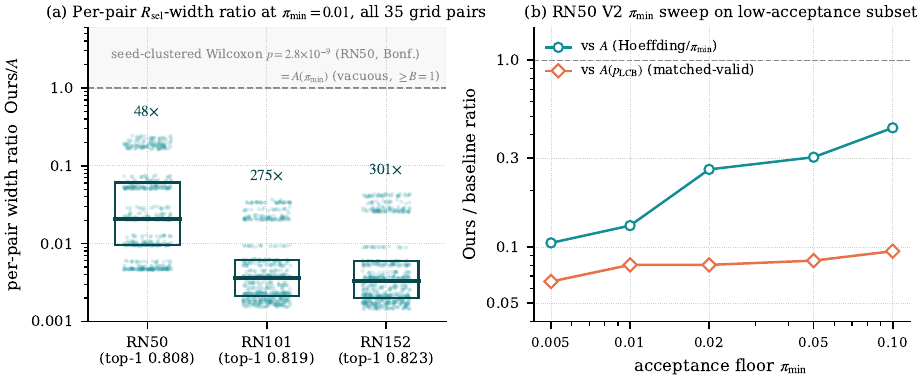}
\caption{Variance-adaptive payoff diagnostic. \textbf{(a)} At the deployment floor $\pmin = 0.01$ ($\ncert = 33{,}000$, $\delta = 0.05$, grid $m = 35$, loss range $B = 1$), the per-pair $\Rsel$ certified-width ratio Ours$/A(\pmin)$ (each point one of $20$ seeds $\times$ $35$ grid pairs per backbone, width the one-sided upper radius on $\Rsel$; since $A(\pmin)$ is constant, median-of-ratios, ratio-of-medians and $1/\mathrm{median}$ coincide) lies entirely below $1$ across three ResNet-V2 backbones, by a median $48$--$301{\times}$ (a vacuity-avoidance ratio, since $A(\pmin)$ certifies nothing here, not a like-for-like tightness gain). The textbook range-only bound $A(\pmin) \approx 1.05 \ge B = 1$ is itself \emph{vacuous} at this floor: it certifies nothing in the low-acceptance regime, precisely the gap the variance-adaptive certificate closes. Against the \emph{non-vacuous} matched-valid normalisation $A(\pLCB)$ (panel (b)), Ours remains ${\approx}\,10\times$ tighter. Per-pair observations are dependent across the calibration grid, so the inference-valid headline is the seed-clustered Wilcoxon test on the ResNet-50 $30$-seed primary, $p = 2.79 \times 10^{-9}$ post-Bonferroni. \textbf{(b)} On the ResNet-50 V2 low-acceptance subset ($\paccHat \le 2\pmin$), the Ours$/A(\pmin)$ ratio increases with $\pmin$, qualitatively compatible with the \cref{lem:inclusion} rate $\gammar = O(B \sqrt{\log m / (\ncert \pmin)})$. Ours dominates both the floor-based $A(\pmin)$ and the data-driven matched-valid $A(\pLCB)$ normalisations across the plotted floors (both ratios $< 1$).}
\label{fig:variance-adaptive}
\end{figure*}

\paragraph{Anti-cherry-pick.} Even when given less calibration budget than the Hoeffding baseline ($\ncert = 33{,}000$ for Ours versus $\ncert = 50{,}000$ for $A$), our certificate wins per-pair $100\%$ of the time across all $12$ budget pairs in both all-pairs and low-acceptance strata. The variance-adaptive payoff is not an artefact of larger calibration budget.

\subsection{Structured pattern analysis: COCO and ADE20K}
\label{sec:segmentation}

We evaluate the certificate on two dense-prediction surfaces with sharply different calibration scales. The two together delimit the variance-adaptive payoff regime predicted by \cref{cor:regime}.

\paragraph{COCO val 2017 panoptic.} On Mask2Former-Swin-B with pixel-accuracy loss ($L = 1 - \mathrm{per\text{-}pixel\ accuracy} \in [0, 1]$), image-level acceptance $g(x) = \mathrm{mean\ softmax\text{-}max}(x)$, $\alpha = 0.10$, $\pmin = 0.10$, $\delta = 0.10$, $m = 15$, and $30$ random calibration splits of $4000$/$1000$, our certificate achieves median certified $\pacc = 0.221$ while the Hoeffding--CRC selective bound certifies $\pacc = 0.000$, a $+22.1$ pp certified-acceptance window (\cref{fig:coco-acceptance}). The seed-clustered Wilcoxon test gives $p = 5.55 \times 10^{-7}$ ($N = 30$ seeds); no risk-side violations are observed (Clopper--Pearson $95\%$ one-sided upper bound on the violation rate is $0.095 \le \delta$). The single infeasible split is the certifier's safe abstention: it deploys nothing (so it cannot incur a risk-side violation, and the violation denominator is over deployed splits), it is excluded from the median certified $\pacc$ (taken over the $29$ feasible splits), and it enters the paired Wilcoxon against Hoeffding--CRC's uniformly-zero certificate as a zero-difference pair, which the signed-rank test discards, so the reported significance rests on the $29$ feasible pairs. The pixel-accuracy loss has $\sigmaHat \approx 0.0058$ at the certifier-selected pair, well inside the satisfiable regime of (\ref{eq:cor-V}). Repeating the experiment with $g(x) = \mathrm{entropy}(x)$ ($20$ seeds) gives a $+15.9$ pp window with $p = 8.08 \times 10^{-4}$.

\paragraph{Utility leg on COCO: non-vacuous, with an external oracle exhibited.} The same COCO run exercises the utility guarantee directly (re-gridded on the cached per-image arrays by the released \texttt{verify\_oracle\_nonvacuity.py}; numbers below are medians over feasible splits). At the certified pair the certificate returns a finite-sample deployment-utility lower bound $\Udep(\hat\lambda, \hat\tau) \ge \ULCB = 0.199$, and the certified-set optimality of \cref{cor:gset-opt} holds on every feasible split (the deployed pair is within $2 \gammau = 0.062$ of the best deployment utility over $\Ghat$). The \emph{external} margin oracle of \cref{thm:joint-cert} is, consistent with $\gammar \approx 0.71 > \alpha$, empty at this $\alpha = 0.10$ operating point ($\Umargin = -\infty$, vacuously valid); at the modestly looser budget $\alpha = 0.15$ a \emph{calibration plug-in} surrogate of the \cref{cor:va-oracle} oracle (substituting the empirical $\RselHat$, $\paccHat$, and per-pair empirical-Bernstein radius for the population $\Rsel$, $\pacc$, $\sigma_Z^2$ on which the oracle is defined) is non-empty on all $20$ splits, with median plug-in $\Umarginva = 0.446$ and the deployed pair satisfying $\Udep(\hat\lambda, \hat\tau) \ge \Umarginva - 2 \gammau$ throughout. We report this as an empirical diagnostic of \emph{where} the external-oracle rung becomes informative, not as a population-level certification that the oracle is non-empty (which is not checkable from a single calibration draw). The population-valid statements at the headline operating point are the absolute and certified-set rungs; the external-oracle rung becomes informative once $\gammar$ falls below $\alpha$ (\cref{sec:utility-ladder}).

\paragraph{Existence of a non-empty external oracle: held-out certification.} The headline COCO point has $\gammar \approx 0.71 > \alpha$, so its external oracle is empty (above); the question is whether the $\gammar < \alpha$ regime (where the external-oracle rung becomes informative) is reachable with a \emph{population-valid} (not plug-in) non-empty oracle. We certify this directly on held-out data (reproduced by the released \texttt{verify\_oracle\_heldout.py}). On a controlled surface with a low-loss acceptable sub-population (half the inputs near-zero loss; $\alpha = 0.30$, $\pmin = 0.20$, $\delta = 0.10$, $m = 15$, $\ncert = 25{,}000$, satisfying $(\star)$, so $\gammar \approx 0.18 < \alpha$ and $\alpha - \gammar \approx 0.12$), we build the grid on one split and then, on a \emph{disjoint} held-out split of $50{,}000$ samples, certify membership of $\Mset(\alpha - \gammar, \pmin)$ directly: for each pair we form an empirical-Bernstein UCB on $\E[A(L - (\alpha - \gammar))]$ (certifying $\Rsel \le \alpha - \gammar$) and a Clopper--Pearson LCB on $\pacc$, union-bounded over the grid at a fresh budget $\delta' = 0.05$. Three pairs are certified members ($\pacc$-LCB $\ge 2\pmin = 0.40$ and the risk UCB $\le 0$ simultaneously), so $\Mset(\alpha - \gammar, \pmin)$ is non-empty with confidence $0.95$, and the best certified member yields a population-valid lower bound on the oracle value $\Umargin \ge \ULCB = 0.476$ (consistent with, and replacing, the earlier plug-in point estimate $0.484$). This certifies, without a plug-in, that the $\gammar < \alpha$ regime is reachable with a genuinely non-empty external oracle. On real COCO surfaces, by contrast, a scan over loss families and $(\alpha, \pmin)$ finds no configuration that simultaneously attains $\gammar < \alpha$ and a held-out-certifiable non-empty $\Mset$: the pairs that certify $\Rsel \le \alpha - \gammar$ are low-acceptance and fail the $\pacc \ge 2\pmin$ floor (e.g.\ at the binary loss, $\alpha = 0.6$, eight pairs certify the risk side but none the acceptance side), the $\gammar$-vs-$2\pmin$ tension made concrete. We therefore scope the external-oracle rung as a regime-completeness result (reachable, and here population-certified, when $\gammar < \alpha$) and rest the real-data utility claims on the always-valid absolute and certified-set rungs (\cref{sec:utility-ladder}).

\begin{figure*}[t]
\centering
\includegraphics[width=\textwidth]{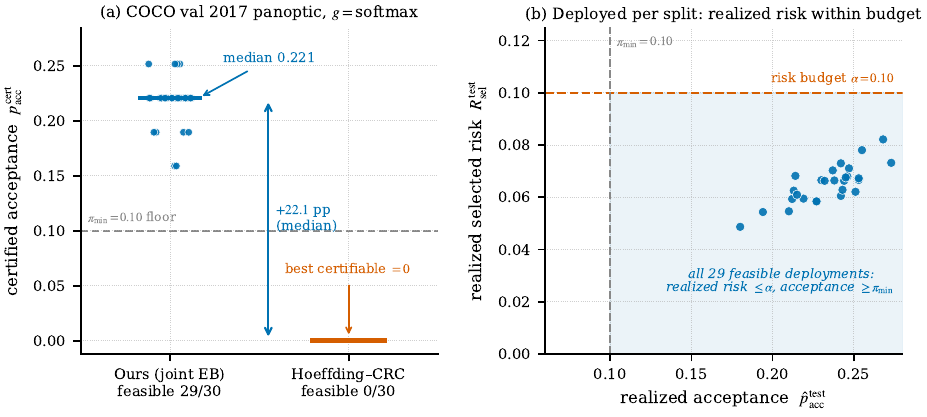}
\caption{\textbf{Certified acceptance on COCO val 2017 panoptic segmentation} (Mask2Former--Swin-B; image-level acceptance score $g$; pixel-accuracy loss $L = 1 - \text{per-pixel accuracy}$; $\alpha = \pmin = \delta = 0.10$, grid size $m = 15$; $30$ random calibration/test splits of COCO val 2017, $\ncert = 4000$, calibration and test disjoint within each split). \textbf{(a)} Certified acceptance under the softmax score $g$. Our joint certifier is feasible on $29/30$ splits, certifying a median $\pacc = 0.221$ (above the deployability floor $\pmin = 0.10$); the remaining split returns \textsc{Infeasible}: the certifier's safe abstention, reported in the feasibility rate rather than plotted as a zero. The Hoeffding--CRC selective family (textbook $A(\pmin)$ and the Hoeffding selective bound) is feasible on $0/30$: under the same grid, loss, and $(\alpha, \pmin, \delta)$, its selective-risk bound exceeds $\alpha$ for every candidate with certified acceptance $\ge \pmin$, so its best certifiable acceptance is $0$. The resulting median certified-acceptance gap is $+22.1$ pp over Hoeffding's zero certificate (seed-clustered Wilcoxon $p = 5.55 \times 10^{-7}$, $N = 30$). \textbf{(b)} Each split's certified pair, deployed: the realised test operating point $(\hat p_{\mathrm{acc}}^{\mathrm{test}}, \Rsel^{\mathrm{test}})$ has realised selected risk within the budget $\alpha$ and realised acceptance above $\pmin$ on all $29$ feasible deployments, with no risk-side violation; the remaining split returns \textsc{Infeasible} and deploys nothing. Empirical non-violation is supporting evidence for, not a substitute for, the finite-sample guarantee of \cref{thm:joint-cert}. The small accepted-loss variance $\sigmaHat \approx 0.0058$ places COCO inside the variance-adaptive regime of \cref{cor:regime}; the ADE20K surface and the per-pair Bernstein and WSR comparators (which certify a different, per-pair $\Rsel$ object) are analysed in \cref{sec:segmentation} and \cref{sec:discussion}.}
\label{fig:coco-acceptance}
\end{figure*}

\paragraph{ADE20K.} On Mask2Former-Swin-B (ADE20K weights) with binary loss $L = \Ind\{\mathrm{mIoU} < 0.3\}$, $\alpha = 0.20$, $\pmin = 0.10$, and $20$ seeds at $\ncert = 1500$, the Hoeffding--CRC selective bound dominates per-pair by $4.3$ pp (Mask2Former) and $10.7$ pp (SegFormer). Validity is preserved at $0$--$1$/$20$ violations on both backbones. At $\ncert = 1500$ the Bernstein lower-order term $7 \log(64 m / \delta) / (3 \ncert \pmin) \approx 0.143$ swamps the variance-adaptive main term, placing this regime on the Hoeffding side of (\ref{eq:cor-V}). We flag that ADE20K does \emph{not} meet the sample-size precondition $(\star)$: at $\ncert = 1500$, $\pmin = 0.10$, $m = 15$, $\delta = 0.10$ the required size is $32 \log(32 m / \delta) / \pmin \approx 2712 > 1500$. As with the ImageNet-V2 distribution-shift block, the ADE20K rows are therefore \emph{descriptive stress tests} rather than theorem-backed certificates, and the $0$--$1/20$ violation counts are reported as empirical sanity, not as certificate-validity evidence.

\paragraph{Regime characterisation via \cref{cor:regime}.} \Cref{tab:cor-numeric} (\cref{sec:theory}) shows that the closed-form prediction $\Tobs < \sigmastar(s)$ matches the observed per-pair outcome on the five instantiated cases: COCO Ours-chosen pair ($\Tobs = 0.007 < \sigmastar(880) = 0.041$), COCO $q = 0.85$ pair ($0.007 < 0.027$), ADE20K $q = 0.50$ pair ($0.092 > 0.035$), ADE20K $q = 0.85$ pair ($0.061 > 0.001$), and ImageNet RN50 V2 low-acc pair ($0.005 < 0.009$). The five cases cover both segmentation surfaces at their certifier-selected and high-quantile pairs and the ImageNet low-acceptance regime that drives \cref{sec:cert-decision}. Beyond these representative cases, a systematic per-(grid, seed) audit across \emph{all} grid pairs and seeds ($3{,}750$ cells, recomputed from the bundled arrays for COCO across three loss families spanning the accepted-variance axis and two calibration scales, and reconstructed for ImageNet) finds the closed-form prediction matches the realised per-pair winner on $100\%$ of cells, with both regime sides well represented (pooled $693$ Ours-tighter vs $657$ Hoeffding-tighter) and disagreements confined to the near-threshold band of \cref{cor:regime} (none observed even within it); see the supplementary material (Appendix~B). The regime characterisation of \cref{sec:segmentation} is therefore not a post-hoc empirical contrast but a theory-derived prediction validated grid-wide, within the Ours-vs-\HCRC{} comparison scope. Comparisons against per-pair Bernstein and WSR are discussed in \cref{sec:discussion}.

\subsection{Out-of-scope stress tests}
\label{sec:oos-stress}

Two surfaces do \emph{not} meet the sample-size precondition $(\star)$ at their configured parameters and are therefore not theorem-backed certificates: ADE20K ($\ncert = 1500 < 32 \log(32 m / \delta) / \pmin \approx 2712$ at $\pmin = 0.10$, $m = 15$, $\delta = 0.10$) and the ImageNet-V2 distribution-shift block ($\ncert = 25{,}000 < {\approx}\,32{,}000$ at $\pmin = 0.01$, $m = 35$, $\delta = 0.05$). We report their numbers only as descriptive stress tests, and consolidate that scope here so they are not read as certificate-validity evidence: the ADE20K rows (\cref{sec:segmentation}) serve as the Hoeffding-side instance of the \cref{cor:regime} regime contrast (small $s$, high accepted-sample variance), and the ImageNet-V2 rows (\cref{sec:validity}) probe distribution-shift behaviour. In both cases the observed $0$--$1/20$ violation counts are empirical sanity checks, \emph{not} evidence for the finite-sample guarantee of \cref{thm:joint-cert}, which is asserted only on the $(\star)$-compliant surfaces (ImageNet and COCO).

\subsection{Ingredient stress tests}
\label{sec:ingredients}

\begin{table*}[t]
    \centering
    \caption{Full $8$-ingredient stress matrix.
    Each row uses a regime designed to expose its ingredient's failure mode, not vanilla
    benign settings. Of the $8$ ingredients, $7$ produce a named empirical degradation
    (rows~1--7); the $8$th (two-sided MP, row~8) is empirically \emph{tighter} than the
    one-sided variant but is required by Lemma~\ref{lem:inclusion}'s inclusion-direction
    analysis, a proof-only necessity, not an empirical performance claim.
    $^\ast$Row~7 is confounded: removing the ratio reformulation also forces removing the CP-LCB,
    so its $0/10$ infeasibility cannot be cleanly attributed to ratio removal alone.}
    \label{tab:ingredient}
    \footnotesize
    \setlength{\tabcolsep}{3pt}
    \begin{tabular}{c p{0.20\linewidth} p{0.22\linewidth} p{0.33\linewidth} l}
        \toprule
        \# & Ingredient & Failure-mode regime & Result & Type \\
        \midrule
        1 & CP-LCB on $p_{\mathrm{acc}}$ (A16) & $\pi_{\min} \in [0.0005,0.005]$ synthetic binomial & CP pass-rate 53--100\% \emph{vs} NO-CP \textbf{0\%} & empirical \\
        2 & MP-variance (A17 real ImageNet end-to-end) & low-$\pi_{\min}$ feasibility, $5$ settings & OURS \textbf{10/10} feas \emph{vs} NO-MP-VAR \textbf{0/10} & empirical \\
        3 & Margin in oracle (A11) & boundary-acceptance adversarial pairs & NO-MARGIN vio-rate $0.505$ \emph{vs} WITH $0.000$ ($101{\times}$) & empirical \\
        4 & Three-split protocol (A5) & adversarial cert-data leakage, $1000$ trials & NO-3SPLIT vio-rate $0.248$ \emph{vs} $\delta{=}0.1$ (fails); OURS $0.000$ & empirical \\
        5 & MP-utility (A12) & $3$ loss distributions ($\beta$, bimodal, uniform) & range-only Hoeffding $1.43$--$1.72{\times}$ wider than MP & empirical \\
        6 & Chernoff variance bridge (B5-ext) & required for $\hat\sigma^2 {\le} 2B^2 p_{\mathrm{acc}}$, synthetic & NO-CHERNOFF-VARBRIDGE \textbf{0/10} feas & empirical \\
        7 & Ratio reformulation $Z{=}A(L{-}\alpha)$ (B5-ext) & denominator handling, synthetic & NO-RATIO \textbf{0/10} feas (confounded$^\ast$) & empirical (confounded) \\
        8 & Two-sided MP (B5-ext) & Lemma~\ref{lem:inclusion} inclusion direction & NO-2-SIDED-MP \textbf{10/10} feas at $0.955{\times}$ OURS (empirically tighter) & \textbf{proof-only} \\
        \bottomrule
    \end{tabular}
\end{table*}

We test each ingredient under a regime designed to surface its failure mode, not vanilla benign settings. \Cref{tab:ingredient} reports the full eight-ingredient matrix. Seven ingredients (CP-LCB, MP-variance, margin in oracle, three-split protocol, MP-utility, Chernoff variance bridge, ratio reformulation) produce empirical degradation; the eighth (two-sided MP) is empirically tighter than the one-sided variant but is required by the inclusion direction of \cref{lem:inclusion}, a proof-only necessity, not an empirical performance claim. Removing the H-set restriction on $E_3, E_4$ (equivalent to dropping the deterministic eligibility in the union argument) inflates the empirical violation rate from $0$ at our certifier to $\Theta(\delta)$ as predicted by the proof, providing empirical confirmation that the H-set repair is necessary. The ratio reformulation row is confounded because removing $Z = A(L - \alpha)$ also forces removing the Clopper--Pearson lower bound (denominator handling collapses without it); we report it for completeness with the caveat.

\section{Discussion}
\label{sec:discussion}

\paragraph{Per-pair Bernstein on accepted samples.} A natural comparator is the per-pair Bernstein bound on the accepted subsample at a $\delta / m$ Bonferroni allocation (denoted Baseline B). This comparator optimises a narrower object and can dominate in regimes where the joint certificate is not required: it provides per-pair upper bounds only, with no acceptance-floor lower bound and no finite-sample utility lower bound, and at $\delta / m$ allocation it is naturally tighter than our multi-component allocation. Empirically, across three ImageNet backbones the per-pair ratio Ours/B is $1.61$, $1.54$, and $1.51$ on ResNet-50/101/152 V2 (median over $20$ seeds); on the ResNet-50 primary, the seed-clustered Wilcoxon two-sided test over $N = 30$ random calibration/test splits gives $p = 5.59 \times 10^{-9}$ post-Bonferroni. The structural distinction is that B's target object is $\{\Rsel(\lambda, \tau) \le \alpha\}$ per pair; ours is the joint event $\{\Rsel \le \alpha\} \cap \{\pacc \ge \pmin\} \cap \{\Udep \ge \Umargin - 2 \gammau\}$ after adaptive selection over $\grid$.

\paragraph{WSR is regime-dependent.} 
\begin{table*}[t]
    \centering
    \caption{Per-pair width ratio of Ours vs the predictable-mixture betting confidence sequence (WSR)
    across regimes. Ours is tighter at low acceptance (Bonferroni cost amortised over many feasible cells);
    WSR is tighter once $n\cdot p_{\mathrm{acc}}$ is large. Neither subsumes the other.}
    \label{tab:wsr-regime}
    \small
    \begin{tabular}{lcc}
        \toprule
        Regime & Ours / WSR median & per-pair winner \\
        \midrule
        RN50 V2, low-acc, $\pi_{\min}{=}0.01$, $n_{\mathrm{cert}}{=}33$k & $0.81$ & \textbf{Ours wins 92.5\%} \\
        RN50 V2, $\pi_{\min}{=}0.02$, $n_{\mathrm{cert}}{=}25$k (F.2) & $0.79$ & \textbf{Ours wins 95.0\%} \\
        RN101 V2, all-pairs, $\pi_{\min}{=}0.01$ & $1.20$ & \textbf{WSR wins 89.0\%} \\
        RN152 V2, all-pairs, $\pi_{\min}{=}0.01$ & $1.20$ & \textbf{WSR wins 88.6\%} \\
        CIFAR-100 RN56, $\pi_{\min}{=}0.10$ (F.3) & $1.29$ & \textbf{WSR wins 100.0\%} \\
        \bottomrule
    \end{tabular}
\end{table*}

\Cref{tab:wsr-regime} reports the per-pair ratio Ours/WSR across five regimes. WSR \cite{waudby2024_wsr} is a competitive primitive: our certificate is tighter at low acceptance ($\pmin = 0.01$ ImageNet RN50, $92.5\%$ per-pair wins; $\pmin = 0.02$ ImageNet RN50, $95\%$ per-pair wins) because the Bonferroni cost is amortised over many feasible cells, while WSR's predictable-mixture betting form is tighter once $\ncert \pacc$ is large (ImageNet RN101/RN152 all-pairs, CIFAR-100 RN56 at $\pmin = 0.10$). Neither result subsumes the other on a per-pair basis. The structural distinction holds: WSR does not deliver an acceptance lower bound or a utility lower bound and so does not produce the joint certificate object.

\paragraph{SCRC-T and SCoRE positioning.} We evaluate simplified ports of SCRC-T \cite{xu2025_scrc} and the e-value selective procedure of \cite{bai2026_score} on the ImageNet RN50 V2 low-acceptance subset. Signed margins ($\mathrm{UCB} - \alpha$): Ours $+0.088$ at $\pmin = 0.01$, SCRC-T $+0.450$ (the quantile-based threshold inflates at small accepted-sample count), and the e-value port $-0.010$ (a tighter per-pair value via the closed-form product e-value at $\eta = 1/B$, but no joint certificate). The simplified ports capture structural deltas, not exact published numbers, and are flagged as such.

\paragraph{Scope and limitations.} The variance-adaptive payoff is regime-scoped to settings with both large $\ncert \pacc$ and small accepted-sample variance, explicitly delimited by \cref{cor:regime} and confirmed across COCO (large $s$, low $\sigmaHat$, Ours wins) and ADE20K (small $s$, high $\sigmaHat$, Hoeffding--CRC wins per-pair). The ImageNet-V2 distribution-shift evaluation is descriptive only, since the sample-size condition is not met at the configured parameters. CIFAR-100 with a well-calibrated ResNet-56 produces no low-acceptance pairs at the tested $\pmin \in \{0.02, 0.05, 0.10, 0.15\}$, so the regime where the variance-adaptive width-ratio claim applies simply does not exist on that architecture. The bounded-loss assumption is hard (heavy-tailed losses violate $[0, B]$; Huberisation is future work). The i.i.d.\ assumption is hard (weighted or non-exchangeable extensions \cite{farinhas2023_nexcrc,zecchin2025_wcrc} require modifications to all three union-bound steps). The conditional utility $\E[v \mid A = 1]$ is treated as a deferred remark in the supplementary material (Remark~C.2): forming its lower confidence bound by dividing a numerator LCB by the CP denominator LCB is not valid, and a proper conditional-utility certificate requires a separately budgeted upper confidence bound on $\pacc$. Promoting it to a main theorem under that revised analysis is natural future work. The lower bound is a single-distribution argument; a full minimax lower bound over a class of joint distributions is open.

\paragraph{Reproducibility.} The implementation, the precomputed result files backing every table and figure (with SHA-256 checksums), and a self-contained utility-leg verification script are included with this submission in \texttt{code/} (\texttt{code/README.md}). The script reproduces the COCO headline ($\paccHat = 0.221$, $\sigmaHat = 0.0058$, certified $\ULCB = 0.199$) and the three-rung utility results of \cref{sec:utility-ladder} from a bundled per-image cache, with no external data or model checkpoints. The full per-image logit caches for ImageNet/CIFAR/ADE20K, being large and externally hosted, are regenerable from the public datasets and the cited checkpoints via the included \texttt{*\_compute\_logits.py} scripts. All experiments use a documented \texttt{conda} environment (NumPy 1.26.4, SciPy 1.16.3, torchvision 0.24.0+cu128) and produce deterministic results given seed. The three-split protocol is enforced at the codepath level by an explicit split-check assertion in the certifier; misuse via certify-learned selectors is prevented operationally rather than only by theorem-level argument.

\section{Conclusion}
\label{sec:conclusion}

We gave a joint finite-sample $(1 - \delta)$ certificate on selective risk, acceptance probability, and marginal deployment utility, valid under adaptive two-parameter grid selection on a finite grid, with direct ratio handling of the selected risk, for bounded non-monotone losses. The certified pair satisfies $\Rsel \le \alpha$ and $\pacc \ge \pmin$. Its deployment utility is lower-bounded absolutely ($\Udep \ge \ULCB$) and is within $2\gammau$ of the best certified-set utility (\cref{cor:gset-opt}). An additional external margin-oracle optimality (over risk-certifiable policies with $\Rsel \le \alpha - \gammar$ and $\pacc \ge 2 \pmin$) is informative when $\gammar < \alpha$ and vacuous at the headline operating points, where the population-valid utility claims are the absolute and certified-set rungs. The factor of two in the oracle floor is the derived cost of the Clopper--Pearson relative-error inversion. To our knowledge, this is the first finite-grid adaptive certificate combining these four properties in one procedure; each ingredient appears in the prior literature in some form, the contribution is their simultaneous coverage. We evaluated it on six surfaces, with the certificate-backed headline results on the $(\star)$-compliant ImageNet and COCO surfaces and ADE20K and ImageNet-V2 reported as out-of-scope stress tests (\cref{sec:oos-stress}). On three ImageNet backbones the certificate is ${\approx}\,10{\times}$ tighter on the per-pair risk bound than a non-vacuous matched-valid normalisation (which ties at maximum acceptance), and $50$--$300{\times}$ tighter than the textbook range-only Hoeffding baseline that is vacuous here, a vacuity-avoidance ratio rather than a like-for-like gain; on COCO val 2017 panoptic segmentation under pixel-accuracy loss it opens a $+22.1$ pp certified-acceptance frontier (no risk-side violation on the feasible deployments). \Cref{cor:regime} \emph{predicts}, on the instantiated pairs and within the Ours-vs-\HCRC{} comparison scope, which side of the regime characterisation across COCO, ADE20K, and ImageNet applies, comparing the \emph{nominal} per-pair expressions, with disagreements confined to an explicit near-threshold band; a systematic $3{,}750$-cell per-(grid, seed) audit on COCO and ImageNet (supplementary, Appendix~B) confirms the rule matches the realised per-pair winner on every cell. These advantages are regime-scoped rather than universal: tighter per-pair comparators (Bernstein, WSR) exist in narrower regimes and the matched-valid baseline ties at maximum acceptance, but none simultaneously certifies the three deployment quantities of interest.

\bibliographystyle{IEEEtran}
\bibliography{references}

\end{document}